\newcommand{\ignore}[1]{}
\newtheorem{lemma}{Lemma}
\newtheorem{theorem}{Theorem}
\pgfplotsset{compat=newest}
\newcommand{\ie}{i.\,e.\xspace}
\tikzset{
    every picture/.style={
        execute at begin picture={
            \let\ref\@refstar
        }
    }
}
\title{Entropy-Based Evolutionary Diversity Optimisation\\ for the Traveling Salesperson Problem}
\date{}
\author{
 Adel Nikfarjam \\
Optimisation and Logistics\\School of Computer Science\\The University of Adelaide\\
  \texttt{adel.nikfarjam@adelaide.edu.au} \\
   \And
 Jakob Bossek \\
Statistics and Optimization\\Dept. of Information Systems\\University of M\"unster\\
  \texttt{jakob.bossek@wi.uni-muenster.de} \\
  \And
 Aneta Neumann \\
Optimisation and Logistics\\School of Computer Science\\The University of Adelaide\\
  \texttt{aneta.neumann@adelaide.edu.au} \\
    \And
 Frank Neumann \\
Optimisation and Logistics\\School of Computer Science\\The University of Adelaide\\
  \texttt{frank.neumann@adelaide.edu.au} \\
}
\begin{document}
\maketitle
\begin{abstract}
Computing diverse sets of high-quality solutions has gained increasing attention among the evolutionary computation community in recent years. It allows practitioners to choose from a set of high-quality alternatives.
In this paper, we employ a population diversity measure, called the high-order entropy measure, in an evolutionary algorithm to compute a diverse set of high-quality solutions for the Traveling Salesperson Problem. In contrast to previous studies, our approach allows diversifying segments of tours containing several edges based on the entropy measure. We examine the resulting evolutionary diversity optimisation approach precisely in terms of the final set of solutions and theoretical properties. Experimental results show significant improvements compared to a recently proposed edge-based diversity optimisation approach when working with a large population of solutions or long segments.
\end{abstract}


\section{Introduction}
The classical optimisation task usually aims to find an (approximately) optimal solution regarding one or more objectives~\cite{DBLP:books/cu/BV2014,DBLP:books/tf/18/2018aam-1}. Evolutionary computation gained interest to compute diverse sets of high-quality solutions differing in terms of one or more structural features for a given optimisation problem~\cite{viet2020evolving, mouret2020quality, arulkumaran2019alphastar, bossens:hal-02555231, fontaine2019}. Generating a diverse set of high-quality solutions provides different implementation alternatives and enables further discussions on solution properties among stakeholders.
Multiple applications of using a diverse set of solutions can be found in the literature, such as robotics \cite{mouret2020quality} and video games \cite{bloem2014air, arulkumaran2019alphastar}.  

 
Several studies aim to find a diverse set of solutions using Evolutionary Algorithms~(EAs).
EAs~\cite{eiben2015introduction} provide us with useful solutions when facing a complex, weakly understood and/or a black-box problem; one can usually gain information on the objective function of such problems only by evaluation. EAs are population-based meta-heuristics, which adopt concepts and mechanisms inspired by natural evolution such as mutation, crossover and (survival) selection. 
Diversity preservation mechanisms are generally incorporated into EAs to prevent premature convergence~\cite{neumann2019evolutionary} and often enable the algorithms to (easily) escape local optima. Recently, diversity was adopted in the context of \emph{Evolutionary Diversity Optimisation} (EDO) for a different reason. Here, the goal is to find a set of solutions with desirable objective values diversified with respect to (structural) properties of the solutions~\cite{neumann2018discrepancy,DBLP:journals/corr/abs-2010-11486}. The field was established by \citet{ulrich2011maximizing} who first proposed an EA to evolve diverse sets of high-quality solutions in the continuous domain.
Recent studies in EDO focused on evolving sets of benchmark instances for the Traveling Salesperson Problem~(TSP) with diverse characteristics important to understand the performance of TSP solvers as well as generating diverse (with respect to different aesthetics) sets of images~\cite{alexander2017evolution,doi:10.1162/evcoa00274}.

The incorporation of star-discrepancy and indicators from evolutionary multi-objective optimisation into evolving diverse sets of benchmarks for TSP and sets of images have been studied and assessed by~\citet{neumann2018discrepancy,neumann2019evolutionary}. A different approach to achieve high diversity in the feature-space of the TSP instance was proposed in~\cite{bossek2019evolving}; here, high diversity was achieved implicitly by more sophisticated mutation operators without explicit diversity-preserving mechanism within the introduced EA.

Several algorithms have been introduced to find the (approximately) optimal solution for TSP \cite{lin1973effective,helsgaun2000effective,xie2008multiagent,nagata2013powerful}. More recently, \citet{viet2020evolving} studied the problem of generating diverse sets of high-quality TSP solutions. The authors introduced two different diversity measures, edge diversity (ED) and pairwise distance (PD), based on pairwise edge overlap. They embedded $k$-OPT mutation operators with different values of $k$ in an EA introduced to solve the EDO problem, and empirically studied the mutations' impact on the EA performance. 

The diversity measures used by \citet{viet2020evolving} do not consider dependencies between the decision variables of TSP. This is while the value of a decision variable in TSP is strongly correlated to the value of other variables \cite{nagata2020high} (we will explain this matter further in the Section~\ref{Sec:problem}). Therefore, we incorporate a diversity measure based on entropy into EDO for the TSP in this study. This measure explicitly addresses the dependency between decision variables. We examine the diversity measure's theoretical properties and determine characteristics that a maximally/minimally diverse set of tours should possess.

Besides, we propose a Mixed-Integer Programming~(MIP) formulation of the considered diversity problem and solve it with an exact solver to a) support the theoretical proofs and b) use it as a baseline for experimentation. Then, we introduce the biased 2-OPT mutation, which mainly focuses on more frequent components in the population, and aims to decrease their frequency to increase diversity. Finally, we perform an extensive experimental study in the unconstrained case (no quality criterion) and the constrained case with (un)biased 2-OPT mutation operators. 
Our results indicate a clear advantage of the entropy-based driven EA compared to EAs based on the distance-based diversity measures introduced by~\citet{viet2020evolving}. The results also show that using biased 2-OPT brings about faster convergence, especially in unconstrained diversity optimisation.

The remainder of this paper is structured as follows. In Section~\ref{Sec:problem}, we describe the problem and three different diversity measures for TSP tours. Next, we provide the theoretical properties of the entropy-based diversity measure in Section~\ref{sec:Entopy}. A MIP formulation and an EA are introduced in Section~\ref{sec:MIP} and~\ref{sec:alg}, respectively. Afterwards, we conduct a series of experiments for unconstrained and constrained diversity optimisation to compare the performance of the high-order entropy measure, the EA, and biased 2-OPT to previously used measures and algorithms. Finally, we finish with some concluding remarks and ideas for future research.

\section{Maximising Diversity in TSP}
\label{Sec:problem}

The TSP is a well-known NP-hard combinatorial optimisation problem. The problem is defined on a directed complete graph $G=(V,E)$ where $V$ is a set of nodes and $E$ is a set of pairwise edges between the nodes, $e = (i, j) \in E$, each associated with a positive weight, $d(e)$. In this paper, we assume that the TSP instances are symmetric (\ie $d(i, j) = d(j, i)$). We denote by $n=|V|$ and $m=|E|=n(n-1)/2$ the cardinality of these sets.
The objective is to compute the permutation $p : V \to V$ minimising the cost function:

$$c(p) = d(p(n),p(1)) + \sum_{i=1}^{n-1} d(p(i),p(i+1)).$$

In this study, we examine TSP in the context of EDO. Given a TSP instance $G$, let $OPT$ be the cost of the optimal tour for $G$ and $\alpha > 0$ be a predefined parameter. The objective is to compute a diverse set of tours where a) the diversity value of the population is maximised in terms of a given diversity measure; b) all individuals comply with a maximum cost (\ie $c(p_i)\leq OPT(1+\alpha), \forall p_i \in P$).  In other words, the goal is to maximise the diversity of the set of solutions subject to the quality constraint. Maximising the diversity of tours provides us with valuable information on solution space around the optimal tour. It can indicate which edges are irreplaceable or complex to replace if we want to stay within the quality threshold. Moreover, it enables decision-makers to choose between different tours; they may decide to visit a city earlier than another or avoid an edge if provided with various alternatives with reasonable costs.   

Recently, \citet{viet2020evolving} studied EDO on TSP for the first time. The authors tailored two edge-based diversity measures, ED and PD towards TSP.
ED measures the diversity based on the equalisation of the frequency of edges in the population. For this purpose, they used the notion of genotypic diversity~\cite{zhu2004population} defined as the mean of pairwise distances:
\begin{equation*}
ED(P) = \sum_{p \in P}\sum_{q \in P} |E(p)\setminus E(q)|,
\end{equation*}
where $E(p)$ is the set of edges of $p = (p(1), \ldots, p(n))$ (\ie $E(p)=\{(p(1), p(2)), (p(2), p(1)) \ldots, (p(n),p(1)), (p(1), p(n))\}$).

On the other hand, PD is defined as
\begin{equation*}
    PD(P) = \frac{1}{n\mu} \sum_{p \in P} min_{q \in P\setminus{p}} \{|E(p)\setminus E(q)|\}.
\end{equation*}
and emphasises uniform pairwise edge distances. PD is closely aligned with the diversity measure in~\citet{7473938}.

For the sake of brevity, we refer the reader to Do et al.~\cite{viet2020evolving} for further details.

One disadvantage of ED and PD is that the dependency of the occurrence of nodes in a tour (decision variables) is not considered. This is while the occurrence of nodes in a tour is significantly dependent on each other in TSP. Here, we show a tour as a permutation $p$ consisting $n$ decision variables $p(i)$ representing the $i$-th node visited in the tour. For instance, if we construct a tour manually, the next node we choose (the value of $p(i+1)$) is heavily dependent on the current node ($p(i)$) and all already visited nodes~\cite{nagata2020high}. This is because we cannot choose a visited node. This issue can result in an inaccurate diversity evaluation. We employ an entropy-based diversity measure introduced by Nagata~\cite{nagata2020high}, termed High-order entropy, to resolve this issue. The measure considers the sequence of $k$ nodes ($k-1$ edges) in tours instead of focusing on edges one by one. \citet{nagata2020high} showed that the High-order entropy measure outperforms the independent entropy measure in terms of preventing premature convergence.

\section{High-Order Entropy Measure} \label{sec:Entopy}
For the high-order entropy measure, the sequence of $2 \leq k \leq n$ nodes ($k-1$ edges) in tours is the feature intended to be diversified.
Let $s =\{v_1, \ldots, v_k\}, v_i \in V$ be a segment consisting of $k$ nodes. Then, its contribution to the overall entropy of the population $P$ is given as
$$h(s)= - \left(f(s)/ (2n\mu)\right)\ln{\left(f(s)/ (2n\mu)\right)},$$
where $f(s)$ is the absolute number of occurrences of segment $s$ in $P$. Note that $2n\mu$ is the total number of occurrences of all segments in a population of size $\mu$ when we are able to traverse each tour in both directions. Each tour contains exactly $2n$ different segments (see Figure~\ref{fig:sampling} for an example). In the following, it is sometimes useful to show a segment by means of its set of edges. For instance, $s = \{s(1), s(2), s(3)\}$ can be also shown as $E(s) = \{(s(1),s(2)), (s(2),s(3))\}$.

Summing over all segments included in the population $P$, the entropy of $P$ is defined as 
$$H(P) =  \sum_{s \in P} h(s).$$


Let $S = \{s_1, \ldots ,s_u\}$ be the set of all possible segments of $k$ nodes for a given TSP instance $G$, and 
$u=\frac{n!}{(n-k)!} = |S|$ denotes the cardinality of $S$. We sort the segments according to the number of their occurrences within $P$ in an increasing order to obtain the vector
$$F(P) = (f(s_1), \ldots, f(s_u)).$$ 
It means that $f(s_1) \leq f(s_2) \leq \ldots \leq f(s_u)$.
We define $f_{\min} = f(s_1)$, $f_{\max} = f(s_u)$, and $C=f_{\max}-f_{\min}$ where $f_{\min}$ and $f_{\max}$ are the smallest and the largest number of occurrences of segments in $P$, respectively.
Intuitively, a maximally diverse population would have all $f(s_i) \in F(P)$ 
almost equalised. We will use $F(P)$ later to analyse whether a given $P$ has the maximum achievable entropy.

\begin{figure}[t]
\centering
\def \n {15}
\def \radius {2cm}
\def \margin {8} 
\begin{tikzpicture}[scale=1]
\foreach \s in {1,...,\n}
{
  \node[draw, circle] at ({360/\n * (\s - 1)}:\radius) {};
  \draw[-, >=latex] ({360/\n * (\s - 1)+\margin}:\radius) 
    arc ({360/\n * (\s - 1)+\margin}:{360/\n * (\s)-\margin}:\radius);

}
\draw[->, >=latex] ({360/\n * (1 - 1)}:\radius+12) 
    arc ({360/\n * (1 - 1)}:{360/\n * (1+1)}:\radius+12);
\draw[->, >=latex] ({360/\n * (2 - 1)}:\radius+18) 
    arc ({360/\n * (2 - 1)}:{360/\n * (2+1)}:\radius+18);
\draw[->, >=latex] ({360/\n * (3 - 1)}:\radius+24) 
    arc ({360/\n * (3 - 1)}:{360/\n * (3+1)}:\radius+24);
    
\draw[->, >=latex] ({360/\n * (1 - 1)}:\radius+15) 
    arc ({360/\n * (1 - 1)}:{360/\n * (-2)}:\radius+15);
\draw[->, >=latex] ({360/\n * (-1)}:\radius+21) 
    arc ({360/\n * (-1)}:{360/\n * (-3)}:\radius+21);
\draw[->, >=latex] ({360/\n * (-2)}:\radius+27) 
    arc ({360/\n * (-2)}:{360/\n * (-4)}:\radius+27);
\node[rotate=26] at (-1.2,2.6) {$\cdots$};
\node[rotate=-26] at (-1.2,-2.6) {$\cdots$};
\end{tikzpicture}
\caption{Illustration of building all segments of length $k=3$ of a TSP tour.}
\label{fig:sampling}
\end{figure}
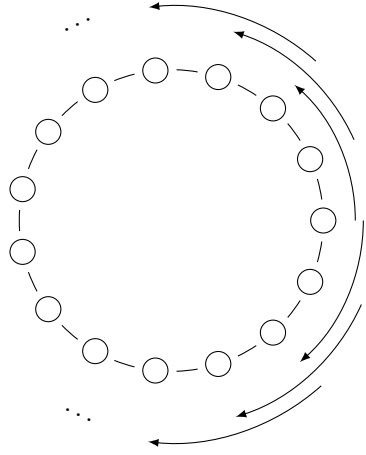
\subsection{Maximum High-Order Entropy} \label{subsec:max_H}
Next, we aim to determine the characteristics of an ideal set of tours having the maximum high-order entropy value $H_{\max}$ for a given TSP instance.
Knowing $H_{\max}$ is important for two main reasons: a) it enables us to have a better understanding of an algorithm's performance by comparing the entropy of the final population with $H_{\max}$ and b) it allows us to use it as a termination criterion for an EA in the course of experimental evaluation with a fixed-target perspective.
\begin{lemma}
\label{lem:equalizing_freq_better_entropy}
Let $P_2$ be a population obtained from a population $P_1$ by decreasing $f_{\max}$ and increasing $f_{\min}$ by one unit each. If $C\geq 2$, then we have $H(P_2) > H(P_1)$.   
\end{lemma}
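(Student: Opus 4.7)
The plan is to observe that the modification from $P_1$ to $P_2$ only alters the counts of two segments, so only two terms of the sum defining $H$ change. Writing $N = 2n\mu$, $a = f_{\min}$, $b = f_{\max}$, and $g(x) = -(x/N)\ln(x/N)$ (with the convention $0\ln 0 = 0$), the difference $H(P_2) - H(P_1)$ collapses to
\[
H(P_2) - H(P_1) \;=\; \bigl[g(a+1) + g(b-1)\bigr] - \bigl[g(a) + g(b)\bigr].
\]
So the claim reduces to the purely analytic inequality $g(a+1) + g(b-1) > g(a) + g(b)$ under the hypothesis $b - a = C \geq 2$.

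Next, I would rearrange this inequality as
\[
g(b) - g(b-1) \;<\; g(a+1) - g(a),
\]
which says that the one-step discrete increment of $g$ at the argument $b-1$ is strictly smaller than the one-step increment at $a$. Because $g$ is strictly concave on $[0,\infty)$ (its second derivative $g''(x) = -1/(Nx)$ is strictly negative for $x>0$, and the values at $0$ cause no issue), the function $x \mapsto g(x+1) - g(x)$ is strictly decreasing in $x$. The hypothesis $C\geq 2$ gives $b - 1 \geq a + 1 > a$, so strict monotonicity of this discrete derivative yields the required strict inequality: either $b-1 = a+1$, in which case $g(b)-g(b-1) = g(a+2)-g(a+1) < g(a+1)-g(a)$, or $b - 1 > a + 1$, in which case the same inequality chains through $g(a+2)-g(a+1)$.

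The only subtlety is verifying that the boundary case $C = 2$ really yields a \emph{strict} increase, since then $a+1$ and $b-1$ coincide: I would handle this explicitly by pointing out that even though the two modified frequencies become equal, the pair $(a+1,b-1)$ is a non-trivial equal-sum averaging of $(a,b)$, so strict concavity of $g$ (equivalently, the strict decrease of the discrete derivative across the single unit step from $a$ to $a+1$) still forces a strict gain. Conceptually this is just the standard Schur-concavity argument for $\sum g(x_i)$ with strictly concave $g$, and I would cite that idea in one line rather than re-deriving it. No other cases or heavy computation are needed, so this is the entire proof.
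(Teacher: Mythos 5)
Your proof is correct, and it takes a genuinely different route from the paper's. You collapse the change to the two affected terms and reduce the claim to the purely analytic inequality $g(a+1)+g(b-1) > g(a)+g(b)$ for $g(x) = -(x/N)\ln(x/N)$, which you settle by strict concavity: $g''(x) = -1/(Nx) < 0$ makes the discrete increment $x \mapsto g(x+1)-g(x)$ strictly decreasing, and $b-1 > a$ whenever $C \geq 2$, with the boundary case $C=2$ and the case $f_{\min}=0$ (via $0\ln 0 = 0$) both handled cleanly. The paper instead keeps the four-term expression for $H(P_2)-H(P_1)$ as a function of $f = f_{\max}$ with $C$ fixed, proves in a separate lemma that this difference is monotonically decreasing in $f$ when $C \geq 2$, and then argues that it converges to $0$ as $f \to +\infty$ (which, since $f \leq \mu$, requires letting $\mu \to +\infty$ jointly), concluding that the difference must be strictly positive for every finite $f$. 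Your argument buys simplicity and robustness: it avoids the somewhat delicate joint limit in $f$ and $\mu$ entirely, needs no asymptotics, and makes transparent that the lemma is just an instance of the standard Schur-concavity of $\sum_i g(x_i)$ under a Robin-Hood transfer; the paper's route, in exchange, yields the monotonicity-in-$f$ statement as a standalone intermediate result. Both establish the lemma, but for this statement alone your concavity argument is the shorter and more self-contained one.
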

In order to show Lemma~\ref{lem:equalizing_freq_better_entropy}, we work under the assumption that $C \geq 2$ and show that $H(P_2) - H(P_1)>0$ holds. We use that $H(P_2) - H(P_1)$ is monotonically decreasing in $f_{\max}$ and $\lim_{f_{\max} \rightarrow +\infty} H(P_2) - H(P_1)$ converges to zero. This implies that Lemma~\ref{lem:equalizing_freq_better_entropy} is true.
The differences in $P_1$ and $P_2$ can be summarised in $f_{\min}$ and $f_{\max}$ where $f_{\max}$ decreased and $f_{\min}$ increased by one unit each in $P_2$. The number of occurrences of other segments are the same in both populations. Also, we have $f_{\max} = f_{\min}+C$.
To simplify the following presentation, we use $f=f_{\max}$ and $f_{\min}=f-C$.
Thus, we have:\\
\begin{align*}
H(P_2) - H(P_1) = -\frac{f-1}{2n\mu}\ln{\left(\frac{f-1}{2n\mu}\right)}-\frac{f-C+1}{2n\mu}\ln{\left(\frac{f-C+1}{2n\mu}\right)}+\frac{f}{2n\mu}\ln{\left(\frac{f}{2n\mu}\right)} + \frac{f-C}{2n\mu}\ln{\left(\frac{f-C}{2n\mu}\right)} 
\end{align*}
We now show that $H(P_2) - H(P_1)$ is monotonically decreasing in $f$ if and only if $C \geq 2$.
\begin{lemma}
\label{lem:g(f)_descending}
If $C \geq 2$ then $H(P_2) - H(P_1)$ is monotonically decreasing in $f$.
\end{lemma}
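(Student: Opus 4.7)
The plan is to treat $f$ as a continuous real variable and show that $g(f) := H(P_2) - H(P_1)$, viewed as a function of $f$, has a strictly negative derivative whenever $C \geq 2$. Writing $N := 2n\mu$ and introducing the auxiliary function $\phi(x) := (x/N)\ln(x/N)$ (with the convention $\phi(0) := 0$), the displayed definition of $g$ collapses to
\[
g(f) \;=\; \phi(f) + \phi(f-C) - \phi(f-1) - \phi(f-C+1),
\]
i.e., $g$ is a second-order difference of $\phi$ between the pairs $\{f-1,f\}$ and $\{f-C, f-C+1\}$. This reformulation keeps the subsequent calculus step short and isolates everything into a single convex-like function $\phi$.

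Differentiating using $\phi'(x) = \frac{1}{N}\bigl(\ln(x/N)+1\bigr)$, the four constant terms cancel in pairs, and a short manipulation yields
\[
g'(f) \;=\; \frac{1}{N}\ln\!\left(\frac{f(f-C)}{(f-1)(f-C+1)}\right).
\]
Monotonicity is now a sign question on the argument of this logarithm. I would then expand and observe the one-line identity $(f-1)(f-C+1) - f(f-C) = C - 1$: the denominator exceeds the numerator by exactly $C-1$. Under the hypothesis $C \geq 2$ this excess is at least $1$, so the ratio inside the logarithm is strictly less than one and $g'(f) < 0$ on the open region $f > C$. Strict monotonic decrease of $g$ on that domain then follows from the mean value theorem.

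The main obstacle is essentially bookkeeping: one has to combine four logarithms into a single log of a ratio and recognise the trivial linear identity above. There is a single genuinely delicate point, namely the boundary value $f = C$ (which corresponds to $f_{\min}=0$), where the formula for $g'$ is singular because of $\ln 0$. I would dispose of this either by continuity of $g$ at $f = C$ (using $\phi(0) := 0$) together with a direct comparison $g(C) > g(C+1)$, or equivalently by taking the one-sided limit of $g'$ as $f \downarrow C$. Either way this lone problematic point is handled outside the main calculus argument, and the lemma follows.
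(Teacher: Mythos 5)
Your proposal is correct and follows essentially the same route as the paper: differentiate with respect to $f$, combine the logarithms into $\frac{1}{2n\mu}\ln\frac{f(f-C)}{(f-1)(f-C+1)}$, and conclude from the identity $(f-1)(f-C+1)-f(f-C)=C-1>0$ that the derivative is negative when $C\geq 2$. Your packaging via the auxiliary function $\phi$ and your separate treatment of the boundary value $f=C$ are tidy refinements, but the underlying argument is the same.
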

\begin{proof}
To prove $H(P_2) - H(P_1)$ is monotonically decreasing, we show that $\frac{d(H(P_2) - H(P_1))}{df} < 0$. We have
\begingroup
\allowdisplaybreaks
\begin{align*}
&\frac{d(H(P_2) - H(P_1))}{df} < 0\\
\Leftrightarrow \quad &\frac{1}{2n\mu}\left(\ln \left(\frac{f}{2n\mu}\right)+\ln{\left(\frac{f-C}{2n\mu}\right)}\right)-\frac{1}{2n\mu}\left(\ln{\left(\frac{f-1}{2n\mu}\right)}+\ln{\left(\frac{f-C+1}{2n\mu}\right)}\right)<0 \\
\Leftrightarrow \quad &\ln{\frac{f(f-C)}{(f-1)(f-C+1)}}<0 \\
\Leftrightarrow \quad &\frac{f(f-C)}{(f-1)(f-C+1)}<1 \\
\Leftrightarrow \quad &(f-1)(f-C+1)>f(f-C)\\
\Leftrightarrow \quad &f^2-f\cdot C +C-1 > f^2-f\cdot C
\end{align*}
\endgroup
The last expression holds as $C \geq 2$, which completes the proof.
\end{proof}

Owing to Lemma~\ref{lem:g(f)_descending}, if $H(P_2) - H(P_1)$ is still positive for an extremely large $f$, it is positive for all smaller values of $f$. We now investigate $f$ approaching to $+\infty$.   

\begin{lemma}
\label{lem:largeF}
$H(P_2) - H(P_1) > 0$ holds for any fixed population size $\mu$ and $C \geq 2$. 
\end{lemma}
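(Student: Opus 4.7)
The plan is to combine Lemma~\ref{lem:g(f)_descending} with a computation of the limit $\lim_{f \to \infty} g(f)$, where I write $g(f) := H(P_2) - H(P_1)$ and regard it as a function of $f = f_{\max}$ with $n$, $\mu$, and $C \geq 2$ held fixed. Lemma~\ref{lem:g(f)_descending} already gives that $g$ is strictly decreasing in $f$. If I can further show $\lim_{f \to \infty} g(f) = 0$, then $g(f) > 0$ at every feasible finite value: for if $g(f_0) \leq 0$ at some $f_0$, then picking any $f_1 > f_0$ would give $g(f) \leq g(f_1) < 0$ for all $f \geq f_1$ by strict decrease, contradicting convergence to $0$.

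The heart of the argument is therefore the limit. Setting $N = 2n\mu$ and $\phi(x) := x\ln x$, I would first note that the four $\ln N$ contributions in the expanded form of $g(f)$ cancel (their coefficients $-(f-1)+f-(f-C+1)+(f-C)$ sum to $0$), leaving
\[
N \cdot g(f) \;=\; \bigl(\phi(f) - \phi(f-1)\bigr) \;-\; \bigl(\phi(f-C+1) - \phi(f-C)\bigr).
\]
The standard expansion $(x-1)\ln(1 - 1/x) = -1 + O(1/x)$ then yields the clean asymptotic
\[
\phi(x) - \phi(x-1) \;=\; \ln x + 1 + O(1/x) \qquad (x \to \infty),
\]
and applying this to $x = f$ and to $x = f - C + 1$ (both of which tend to infinity as $f$ does, since $C$ is fixed) and subtracting gives
\[
N \cdot g(f) \;=\; \ln\!\frac{f}{f - C + 1} + O(1/f) \;\longrightarrow\; 0,
\]
because $f/(f - C + 1) \to 1$.

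Combining this limit with Lemma~\ref{lem:g(f)_descending} via the short sign argument above completes the proof. The only delicate point I anticipate is tracking the $O(1/x)$ remainders carefully enough to see that nothing spurious survives in the limit, but this is routine algebra. As an aside, a self-contained alternative that avoids the limit entirely is to invoke the strict convexity of $\phi$: since $\phi'$ is strictly increasing and the unit-length intervals $[f-C, f-C+1]$ and $[f-1, f]$ are ordered whenever $C \geq 2$, a translation argument directly yields $\phi(f) - \phi(f-1) > \phi(f-C+1) - \phi(f-C)$. I would still prefer the limit-based route here, as it follows exactly the roadmap already sketched immediately after Lemma~\ref{lem:equalizing_freq_better_entropy}.
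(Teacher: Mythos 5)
Your proof is correct, and it follows the same overall roadmap as the paper's: combine the monotone decrease from Lemma~\ref{lem:g(f)_descending} with the fact that the difference vanishes in the limit, so that a strictly decreasing function converging to zero must be positive everywhere. The two arguments diverge, however, in how the limit is taken, and your version is the cleaner of the two. The paper notes that $f$ is bounded in terms of $\mu$ and therefore sends $\mu$ and $f$ to infinity \emph{together}, evaluating each term $\tfrac{f-\,\cdot}{2n\mu}\ln\tfrac{f-\,\cdot}{2n\mu}$ as $\tfrac{1}{2n}\ln\tfrac{1}{2n}$; this implicitly assumes $f/\mu \to 1$ and sits somewhat awkwardly with the lemma's ``for any fixed $\mu$'' phrasing. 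You instead hold $N = 2n\mu$ fixed, observe that the $\ln N$ contributions cancel because their coefficients sum to zero, and study $N\,g(f) = \bigl(\phi(f)-\phi(f-1)\bigr) - \bigl(\phi(f-C+1)-\phi(f-C)\bigr)$ with $\phi(x)=x\ln x$ as a one-variable real function of $f$ on $(C,\infty)$, whose limit $\ln\!\bigl(f/(f-C+1)\bigr) + O(1/f) \to 0$ is elementary; this decouples $f$ from $\mu$ entirely and matches the lemma as stated. Your aside is also worth flagging: since $C \geq 2$ places the unit interval $[f-C,\,f-C+1]$ weakly to the left of $[f-1,\,f]$ and $\phi'$ is strictly increasing, comparing $\int_{f-1}^{f}\phi'$ with $\int_{f-C}^{f-C+1}\phi'$ gives $N\,g(f)>0$ in one line, rendering both the monotonicity lemma and the limit computation unnecessary --- arguably the proof the paper should have given.
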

\begin{proof}
As one can notice, $f$ is bounded by $\mu$. that means $f$ can approaches to $+\infty$, only if $\mu$ approaches to $+\infty$ as well. Thus, we investigate $H(P_2) - H(P_1)$ in the most extreme case where $\mu$ and $f \rightarrow +\infty$.   
\begingroup
\allowdisplaybreaks
\begin{align*}
     \quad & H(P_2) - H(P_1)\\
    \Leftrightarrow \quad &\left(\frac{f-C}{2n\mu}\right)\ln{\left(\frac{f-C}{2n\mu}\right)} - \left(\frac{f-1}{2n\mu}\right)\ln{\left(\frac{f-1}{2n\mu}\right)} + \left(\frac{f}{2n\mu}\right)\ln{\left(\frac{f}{2n\mu}\right)}-\left(\frac{f-C+1}{2n\mu}\right)\ln{\left(\frac{f-C+1}{2n\mu}\right)}
\end{align*}
We compute the limit for $\mu$ and $f \rightarrow +\infty$ by applying L'Hopital's rule and have:
\begin{align*}
   \quad &\left(\frac{1}{2n}\right)\ln{\left(\frac{1}{2n}\right)} - \left(\frac{1}{2n}\right)\ln{\left(\frac{1}{2n}\right)} + \left(\frac{1}{2n}\right)\ln{\left(\frac{1}{2n}\right)}-\left(\frac{1}{2n}\right)\ln{\left(\frac{1}{2n}\right)} = 0
\end{align*}
\endgroup
The last expression shows that $H(P_2) - H(P_1)$ converges to $0$
if $f \rightarrow +\infty$. We have $f \leq \mu$ and
using Lemma~\ref{lem:g(f)_descending}, this implies that $H(P_2) - H(P_1)>0$ for any fixed $\mu$ 
if $C \geq 2$. 
\end{proof}
\ignore{
By adding $\ln{\left(\frac{f}{2n\mu}\right)}- \ln{\left(\frac{f}{2n\mu}\right)} + \ln{\left(\frac{f+C-1}{2n\mu}\right)} - \ln{\left(\frac{f+C-1}{2n\mu}\right)}=0$ to the left hand side of the previous equation, we have:
\begin{align*}
     \nonumber\quad &(f+1)\left(\ln{\left(\frac{f}{2n\mu}\right)}-\ln{\left(\frac{f+1}{2n\mu}\right)}\right)\\ \nonumber
    &-(f+C)\left(\ln{\left(\frac{f+C-1}{2n\mu}\right)}-\ln{\left(\frac{f+C}{2n\mu}\right)}\right)\\ \nonumber
    &-\ln{\left(\frac{f}{2n\mu}\right)}+ \ln{\left(\frac{f+C-1}{2n\mu}\right)} \\ \nonumber
    \Leftrightarrow \quad & \ln{\left(\frac{f}{f+1}\right)^{(f+1)}}+\ln{\left(\frac{f+C-1}{f}\right)}\\
    &-\ln{\left(\frac{f+C-1}{f+C}\right)^{(f+C)}} 
\end{align*}
Using $\lim_{x\rightarrow\infty}\left(1-\frac{1}{x}\right)^{x} = e^{-1}$, we have that $\lim_{f\rightarrow\infty}\left(\frac{f}{f+1}\right)^{(f+1)}= e^{-1}$ and also $\lim_{f\rightarrow\infty}\left(\frac{f+c-1}{f+c}\right)^{(f+c)}= e^{-1}$. Moreover, $\lim_{f\rightarrow\infty} \left(\frac{f+C-1}{f}\right) = 1$. Therefore, if $f$ approaches to $+\infty$, we have:
\begin{align*}
   \quad \ln{e^{-1}}+ \ln{1} - \ln{e^{-1}} = 0
\end{align*}

\endgroup
The last expression shows that $H(P_2) - H(P_1)$ converges to $0$ when f approaches to +$\infty$. By considering Lemma~\ref{lem:g(f)_descending} ($H(P_2) - H(P_1)$ is monotonically decreasing in $f$), we can argue that $H(P_2) - H(P_1)$ converges to $0$ from higher values if $C \geq 2$. In other words, $H(P_2) - H(P_1) > 0$ holds for all real values of $f$ if $C \geq 2$. 

}

\begin{theorem}
\label{thm:frequency_of_all_segments_for_H_max}
For every complete graph with $n$ nodes and every population size $\mu \geq 2$, the entropy of a population $P$ with $\mu$ individuals is maximum if and only if $C$ is equal to zero or one.
\end{theorem}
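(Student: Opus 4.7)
The plan is to reduce both directions of the biconditional to Lemma~\ref{lem:equalizing_freq_better_entropy} by reasoning at the level of frequency vectors. Since $\sum_{s \in S} f(s) = 2n\mu$ is fixed and $H(P)$ depends only on the multiset of values $\{f(s_i)\}_{i=1}^{u}$, writing $2n\mu = qu + r$ with $0 \leq r < u$, every frequency vector with $C \leq 1$ must consist of exactly $u - r$ entries equal to $q$ and $r$ entries equal to $q + 1$ (the subcase $r = 0$ degenerates to $C = 0$). Hence all populations with $C \leq 1$ share a single common entropy value, which I denote $H^{*}$. The forward direction then follows immediately by contraposition: if $C(P) \geq 2$, Lemma~\ref{lem:equalizing_freq_better_entropy} directly delivers a modification of $P$ with strictly larger entropy, so $P$ cannot be an entropy maximiser.

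For the reverse direction I need to show $H(P') \leq H^{*}$ for every population $P'$. If $C(P') \leq 1$ this is trivial, so assume $C(P') \geq 2$ and apply the Lemma~\ref{lem:equalizing_freq_better_entropy} transformation iteratively. Each step strictly increases the entropy, while the potential $\Phi = \sum_i f(s_i)^2$ strictly decreases by exactly $2(C - 1) \geq 2$, as a short expansion of $(f_{\max} - 1)^2 + (f_{\min} + 1)^2 - f_{\max}^2 - f_{\min}^2$ confirms. Since $\Phi$ is a non-negative integer, the iteration terminates in at most $\Phi(P')/2$ steps, and it can only halt at a configuration with $C \leq 1$, whose entropy equals $H^{*}$. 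Chaining the strict inequalities produced along the way gives $H(P') < H^{*}$. Combined with the forward direction, this identifies $H^{*}$ as the maximum and its maximisers as precisely the populations with $C \leq 1$.

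The main delicate point will be the monovariant argument for termination together with the strict descent of $\Phi$; once the identity $\Delta \Phi = -2(C - 1)$ is in place, everything else is bookkeeping. A secondary issue is the realizability of the intermediate frequency vectors as genuine tour populations during the iteration, but I follow the convention of Lemma~\ref{lem:equalizing_freq_better_entropy} and conduct the analysis at the frequency-vector level, so the chain of strict improvements remains legitimate without requiring each intermediate vector to correspond to a concrete set of tours.
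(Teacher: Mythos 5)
Your proof is correct and is strictly more complete than the paper's own argument. The paper disposes of the theorem in two sentences: it invokes Lemma~\ref{lem:equalizing_freq_better_entropy} to conclude that any population with $C\geq 2$ can still be improved---which is exactly your forward direction by contraposition---and then simply asserts that $C\in\{0,1\}$ suffices; the ``if'' direction is never argued (it is only used implicitly afterwards when $H_{\max}$ is computed via the pigeonhole principle). You supply the missing half with two observations the paper does not make: (i) since $\sum_s f(s)=2n\mu$ is fixed, the division algorithm forces every frequency vector with $C\leq 1$ to be the \emph{same} multiset ($u-r$ entries equal to $q$ and $r$ entries equal to $q+1$ where $2n\mu=qu+r$), so all such populations share one entropy value $H^{*}$; and (ii) the improvement step of Lemma~\ref{lem:equalizing_freq_better_entropy} terminates, which you certify with the potential $\Phi=\sum_i f(s_i)^2$ and the correct identity $\Delta\Phi=-2(C-1)\leq -2$. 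Chaining the strict improvements shows every $C\geq 2$ vector has entropy strictly below $H^{*}$, which is what actually makes the $C\leq 1$ populations maximisers. The one caveat you rightly flag is shared by the paper: the transformation acts on frequency vectors, and neither you nor the authors verify that the intermediate vectors---or even the final balanced vector---are realised by genuine sets of $\mu$ tours. For the upper-bound half this is harmless, since entropy depends only on the frequency vector; but the existence of a population with $C\leq 1$, needed for the theorem to be non-vacuous and for the forward direction to exhibit an actual better population, is taken on faith in both treatments.
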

\begin{proof}
Lemma \ref{lem:equalizing_freq_better_entropy} shows that a population's entropy can be increased as long as $C \geq 2$. Therefore, $C$ should be set to 0 or 1 to have a maximum entropy population.    
\end{proof}
To set $C$ to 0 or 1, the number of occurrences of all possible segments should be equalised. For every TSP instance, there are $u$ possible segments, and $2n\mu$ occurrences of all segments for every population. The optimal value of $f_{\min}$ is equal to $[\frac{2n\mu}{u}]$. Let $f_{\min}^*$ and $C^*$ be the values of $f_{\min}$ and $C$ in an optimal population. It should be noted that based on the Pigeonhole principle, if $\frac{2n\mu}{u}$ is integer, $C \in \{0,2,3,\dots,u\}$ and $C^* = 0$; otherwise, $C \in \{1,2,\dots,u\}$ and $C^* = 1$. In other words, $C$ can get only one of the values of 0 or 1 depending on parameters of the problem such as the sizes of population, segments, and TSP instances. All in all, $(f_{\min}^*+1)u-(2n\mu)$ segments occur $f_{\min}^*$ times in an optimal population whereby, the number of occurrences of the other segments is equal to $f_{\min}^*+C^* = f_{\max}^*$.   
\begin{align}
    &H_{\max} = -((2n\mu)-(f_{\min}^*\cdot u))\left(\frac{f_{\max}^*}{2n\mu}\right)\ln{\left(\frac{f_{\max}^*}{2n\mu}\right)} - ((f_{\min}^*+1)u-(2n\mu))\left(\frac{f_{\min}^*}{2n\mu}\right)\ln{\left(\frac{f_{\min}^*}{2n\mu}\right)} \label{eq:max}
\end{align}

Note that the entropy of any set of TSP tours is always greater than zero. This is because no segments are allowed to occur within a tour more than once. In the worst-case scenario where a population consists of $\mu$ copies of a single tour, we have $2n$ different segments with the number of occurrences $\mu$. We can determine the entropy value of a population with such characteristics from:

 \begin{align}
     &H_{\min} = - 2n\left(\frac{1}{2n}\ln{\left(\frac{1}{2n}\right)}\right) = \ln(2n) \label{eq:min}
 \end{align}

\section{Mixed-Integer Programming Formulation}
\label{sec:MIP}

In this section, we give a MIP formulation for the considered problem. Solving the proposed MIP with an exact solver such as the Cplex solver \cite{cplex2009v12} can support the maximum entropy's proof. Also, it would provide us with a baseline for investigate the performance of other algorithms. The objective function is formulated as follows:
\begin{align}
H(P) = \sum_{s \in P} - \left(f(s)/ (2n\mu)\right)\ln{\left(f(s)/ (2n\mu)\right)} \to \max! \label{MIP:obj1}
\end{align}
where $f(s), s=\{v_i,\ldots, v_q\}$ is calculated from
\begin{align}
&\displaystyle f(s) = \sum_{p \in P} {x_{ij}^p \cdots x_{tq}^p} + \sum_{p \in P} {x_{ji}^p \cdots x_{qt}^p} \label{MIP:Con6}
\end{align}

Here, $x_{ij}^p$ is a binary variable; it is set to $1$ if edge $e = (i,j)$ is included in tour $p$; otherwise, it is equal to zero. For example, if $s = (v_3, v_5, v_2, v_1)$,
$f(s) = \sum_{p \in P} (x_{35}^p\cdot x_{52}^p\cdot x_{21}^p) + \sum_{p \in P} (x_{12}^p \cdot x_{25}^p \cdot x_{53}^p)$. The maximisation of the objective function in Eq.~\ref{MIP:obj1} is subject to the following constraints: 

\begin{align}
    &\displaystyle\sum_{i=1}^n\sum_{j=1}^n d(i,j)x_{ij}^p \leqslant (1+ \alpha)\cdot OPT,  ~~\forall p \in P \label{MIP:Con1}\\
    &\displaystyle\sum_{i=1,i\ne j}^n x_{ij}^p = 1, ~~\forall j \in V, p \in P \label{MIP:Con2}\\
    &\displaystyle\sum_{j=1,i\ne j}^n x_{ij}^p = 1, ~~\forall i \in V, p \in P \label{MIP:Con3}\\
    &\displaystyle w_i^p-w_j^p+nx_{ij}^p \leqslant n-1, \forall i, j \in V, i \ne j, p \in P \label{MIP:Con4}\\
    &\displaystyle w_i^p \leqslant n-1, ~~\forall i \in \{2,\ldots,n\}, p \in P \label{MIP:Con5}\\
    &\displaystyle x_{ij}^p \in \{0,1\}, w_i \geq 0, \forall i, j \in V, p \in P \label{MIP:ConV}.
\end{align}
Here, $w_i^p$ is a positive integer showing the position of node $i$ in the tour $p$. Equation~\ref{MIP:Con1} makes sure that all solutions satisfy a minimal quality with respect to tour length. Equations~\ref{MIP:Con2} and \ref{MIP:Con3} guarantee that all nodes are visited exactly once in each tour, while Equations~\ref{MIP:Con4} and \ref{MIP:Con5} prevent the creation of sub-tours as proposed by~\citet{miller1960integer}.
The objective function (Eq.~\ref{MIP:obj1}) should be linearised to use the MIP solvers.

\subsection{Linearisation} 

In Section~\ref{subsec:max_H}, we showed that the entropy value of a  population $P$ is maximum if and only if $C=f_{\max}-f_{\min} \in \{0,1\}$. In other words, Equation~\ref{MIP:obj1} is maximised if and only if $C$ is set to zero or one. Therefore, We can replace the objective function with
\begin{align}
   C = f_{\max}-f_{\min} \to \min! \label{MIP:obj2}
\end{align}
Here, $f_{\min}$ and $f_{\max}$ are dependent on $x_{ij}^p$. Thus, the correlation between these two variables and the other MIP variables should be explicitly defined in the MIP formulation before using Equation~\ref{MIP:obj2} as the MIP's objective function. For this purpose, we need to add new constraints and variables.
\begin{align}
    &\displaystyle y_{ij\cdots s}^p \geqslant x_{ij}^p+ \cdots +x_{tq}^p-k+2, ~~ \forall i,\ldots,q \in V, p \in P \label{MIP:Con9}\\
    &\displaystyle y_{ij\cdots s}^p \geqslant x_{qt}^p+ \cdots +x_{ji}^p-k+2, ~~ \forall i,\ldots,q \in V, p \in P \label{MIP:Con10}\\
    &\displaystyle \sum_{i}\sum_{j}\cdots\sum_{t}\sum_{p}y_{ij\cdots sq}^p \leqslant 2n\mu \label{MIP:Con11}\\
        &\displaystyle f_{\max} \geqslant \sum_{p \in P}y_{ij\cdots tq}^p, ~~ \forall i,\ldots,q \in V \label{MIP:Con7}\\
    &\displaystyle f_{\min} \leqslant \sum_{p \in P}y_{ij\cdots tq}^p, ~~ \forall i,\ldots,q \in V \label{MIP:Con8}
\end{align}
Here, $y_{ij\cdots tq}^p$ is a binary variable set to 1 if segment $s=\{v_i,\vdots,v_q\}$ or $s'=\{v_q,\vdots v_i\}$ is included in tour $p$. For example, if the tour $p$ includes either of the segments $s = \{v_3, v_5, v_2, v_1\}$ (\ie $E(s)=\{(3, 5), (5,2), (2,1)\}$) or $s' = \{v_1, v_2, v_5, v_3\}$, both $y_{3521}^p$ and $y_{1253}^p$ are set to 1. Note that segments $s$ and $s'$ are identical since we can traverse a tour in both directions. Equations~\ref{MIP:Con9} and \ref{MIP:Con10} ensure that $y_{ij\cdots q}^p$ is set to $1$ if segment $s$ is included in tour $p$. Equation~\ref{MIP:Con11} guarantees that $y_{ij\cdots q}^p$ is equal to zero if the segment $s$ is not included in the tour $p$. Finally, Equations~\ref{MIP:Con7} and \ref{MIP:Con8} determine $f_{\min}$ and $f_{\max}$.
Moreover, $f(s)$ can be calculated from summing up $y_{ij\cdots q}^p$ over $p$. In the final MIP formulation, Equation~\ref{MIP:obj2} serves as the objective function subject to the constraints [\ref{MIP:Con1}-\ref{MIP:ConV}] and [\ref{MIP:Con9}-\ref{MIP:Con8}]. 

\section{Entropy-based Evolutionary Diversity Optimisation}
\label{sec:alg}

We introduce an EA to address EDO for TSP tours (see Algorithm~\ref{alg:ea} for an outline). The algorithm is initialised with a population $P$ consisting of $\mu$ copies of an optimal tour/permutation for the given TSP instance. A broad range of successful algorithms is proposed in the literature to find the optimal tour in TSP, such as Concorde by \citet{applegate2003implementing}. Moreover, the optimal tours have been provided for most benchmark instances in the well-known TSPlib~\cite{Reinelt91tsplib}. Next, a parent $p$ is selected uniformly at random, and mutation operators generate two offspring individuals, $p'$ and $p''$, one by biased 2-OPT and the other by classic 2-OPT. The offspring by biased 2-OPT is more likely to contribute to the population's entropy, while the other stands a higher chance to comply with the quality criterion. Having used both operators simultaneously, we increase the likelihood of a successful iteration. Having removed $p$ from $P$, add it to $P'$ (the survival selection's pool). If the length of $p'$ is larger than $(1+\alpha)\cdot OPT$, $p'$ is discarded; otherwise, it is added to $P'$. This step is repeated for $p''$. Afterwards, an individual $p^*$ is selected from $P'$ where $H(P \cup \{p^*\})$ is maximum; add $p^*$ to $P$. From the entire population, we solely consider parents for survival selection to increase time efficiency. We will discuss that the exclusion of the rest of the population does not affect the results significantly.
These steps are repeated until a termination criterion is met.

\begin{algorithm}[t]
\begin{algorithmic}[1]
\STATE Initialise the population $P$ with $\mu$ TSP tours such that $c(p) \leq   (1+ \alpha)\cdot OPT$ for all $p \in P$.\\
\STATE Choose $p \in P$ uniformly at random and produce two offspring $p'$ and $p''$ of $p$ by biased 2-OPT and classic 2-OPT.\\ 
\STATE Remove $p$ from $P$ and set $P' =\{p\}$.\\
\STATE If $c(p') \leq (1+ \alpha)\cdot OPT$, add $p'$ to $P'$. \\
\STATE If $c(p'') \leq (1+ \alpha)\cdot OPT$, add $p''$ to $P'$.\\
\STATE Select an individual $p^*$ from $P'$ where $p^* = \text{arg$\,$max}_{p^* \in P'} \{H(P \cup \{p^*\})\}$ and add $p^*$ to $P$.\\
\STATE Repeat steps 2 to 6 until a termination criterion is reached.
\end{algorithmic}
\caption{Diversity maximising EA}
\label{alg:ea}
\end{algorithm}
\subsection{Biased 2-OPT}
We introduce two biased versions of $2$-OPT mutation. In the classic $2$-OPT, two nodes are selected randomly. These two nodes are swapped, and all nodes between them are sorted in the backward direction. Since our focus is on the symmetric TSP, the difference between the parent and offspring is solely in the two edges where the swap takes place. The biased versions introduce a bias into the classic $2$-OPT. Here, the population's high frequent segments are more likely to be selected as sources for swaps. In the \emph{normalised biased $2$-opt} version, there is a competition based on the number of occurrences of segments where a segment's likelihood is proportional to its frequency. The \emph{absolute biased $2$-OPT} only selects the segment with the highest occurrences. The absolute biased $2$-opt is used in unconstrained diversity optimisation where the diversity is not subject to the quality constraint. This is while the normalised biased version is utilised in constrained diversity optimisation since focusing only on the most frequent segments decreases the probability of generating an offspring compatible with the quality criterion. 

Owing to the parent and the offspring's similarity, the algorithm compares the offspring with its parent rather than the entire population. All versions of $2$-OPT change solely two edges of a parent. Thus, the population's entropy is likely to decrease if both parent and offspring remain in the population, especially in unconstrained diversity optimisation. In constrained diversity optimisation, it can sometimes improve the results slightly if we compare an offspring to the entire population. However, it significantly increases the computational costs. This is because the latter survival selection requires updating every individual's contribution to the entropy whenever an offspring is generated. More specifically, the EA needs $\mu$ more diversity re-evaluation per generated offspring if it compares an offspring with the entire population. Since the re-evaluation is computationally expensive, it can affect the algorithm's time efficiency, especially when $\mu$ is large. The re-evaluation can be avoided by comparing the offspring to the parent solely.
\section{Experimental Investigation} \label{sec:experiments}
We conduct a series of experiments to evaluate the suitability of the proposed algorithm and diversity measure. The experiments are classified into three parts. First, we examine the algorithm's results to make sure that a) the considered survival selection does not affect the entropy of the final population by comparing the results with an EA including the entire population in the survival selection procedure and b) the results obtained from our EA are consistent with the results of the Cplex solver 
and $H_{\max}$~(see Eq.~\ref{eq:max}). Subsections~\ref{subsec:Uncon_Res} and \ref{subsec:con_Res} are dedicated to comparing the introduced EA and the EAs based on PD and ED by~\citet{viet2020evolving} in unconstrained diversity optimisation and constrained diversity optimisation, respectively.
\subsection{Validation of the Proposed EA}
\subsubsection{Survival Selection Procedure}
As mentioned, it is more efficient to compare the offspring with the parent than the entire population, especially in unconstrained optimisation.
Here, we analyse the algorithm's survival selection and compare it with the same algorithm where the offspring is compared with all individuals in the population. All combinations of $n = \{25, 50\}$, $\mu = \{12, 20, 50\}$, and $k = \{2, 3, 4\}$ are subject to experimentation. Due to the relaxation of the quality constraint, we consider complete graphs where the edges' weight are all equal to one, as TSP instances for unconstrained diversity optimisation.
The termination criterion is reaching the limit of $100\,000$ generated offspring. The results show no significant differences in the mean of entropy values over all the cases. The observation is confirmed with the Kruskal-Wallis test at significance level $95\%$ and the Bonferroni correction method. However, the first selection procedure avoids $\mu$ entropy re-evaluations per cost evaluation. This makes the EA considerably more efficient. For instance, the mean of CPU time is $45$ seconds for the first selection procedure where $n = 50$, $\mu = 50$, and $k = 2$, while the figure stands at $267$ seconds for the second selection procedure. 
\subsubsection{Comparison between the exact solver and the proposed EA}
We consider unconstrained diversity optimisation to investigate the results obtained from solving the MIP formulation by the Cplex solver. The constrained optimisation is not taken into account in this section for two main reasons. The main reason for using an exact solver such as Cplex is to support the formula provided for $H_{\max}$ such that we can use the $H_{\max}$ as the baseline for larger instances where the Cplex solver is incapable of solving the problem in a bounded time. However, imposing quality constraint might eliminate the part of the solution space to which $H_{\max}$ belongs. Thus, we cannot verify the formula in constrained diversity optimisation. Second, the Cplex solver is incapable of solving medium or large instances, even in unconstrained diversity optimisation, and there is no point in investigating tiny instances solely.  

Here, the experiments take place on all combinations of $\mu \in \{6,12,24\}$, $n \in \{5,10,15,20\}$ and $k \in \{2,3\}$. A time-bound of 24 hours is considered for the Cplex solver. The results are summarised in Table~\ref{tab:unconstrained_Cplex}. Note that the MIP formulation's objective function is to minimise $f_{\max}-f_{\min}$, while the EA uses the entropy value as the fitness function. 

In Table~\ref{tab:unconstrained_Cplex}, $O$ and $N$ represent the capability and incapability of the Cplex solver in converging to the global optimum within the time-bound, respectively. Table~\ref{tab:unconstrained_Cplex} indicates cases where Cplex cannot solve instances to the optimal value ($n = 20$, $\mu = 24$, and $k = 2$). Furthermore, it cannot find a feasible solution for the instances where $n \in \{10,15,20\}$, $\mu = 24$, and $k = 3$. This highlights the need for an efficient algorithm within a time-bound. More importantly, Table~\ref{tab:unconstrained_Cplex} shows that where the Cplex solver finds the optimal solution, the proposed EA converges to a population with the same entropy, which is consistent with the $H_{\max}$~\ref{eq:max}. The fact that both Cplex and the EA converged to $H_{\max}$ implies that Equation~\ref{eq:max} is correct. Therefore, we can use $H_{\max}$ as another termination criterion of the introduced EA and the baseline for further experimental investigation. Furthermore, the proposed EA converges in less than two minutes and less than a thousand iterations (cost evaluations) overall instances.     
\begin{table}
\caption{Comparison of the entropy of final populations obtained by Cplex and the EA (symbols $O$ and $N$ indicate whether Cplex converged within the given time-bound).}
\vspace{2mm}
\label{tab:unconstrained_Cplex}
\centering
\renewcommand{\tabcolsep}{12pt}
\renewcommand{\arraystretch}{1.2}
\begin{footnotesize}
\begin{tabular}{ccccccccc}
\toprule
& & \multicolumn{3}{c}{$\mathbf{n = 5}$} & 
\multicolumn{3}{c}{$\mathbf{n = 10}$}\\
\cmidrule(l{2pt}r{2pt}){3-5}
\cmidrule(l{2pt}r{2pt}){6-8}
$\mu$ & $k$ & \textbf{ENT} & \textbf{Cplex} & \textbf{$H_{\max}$} & \textbf{ENT} & \textbf{Cplex} & \textbf{$H_{\max}$} \\
\midrule
6 & 2 & $3.00$ & $3.00$ $(O)$ & $3.00$ & $4.44$ & $4.44$ $(O)$ & $4.44$\\
6 & 3 & $4.09$ & $4.09$ $(O)$ & $4.09$ & $4.79$ & $4.79$ $(O)$ & $4.79$\\
12 & 2 & $3.00$ & $3.00$ $(O)$ & $3.00$ & $4.48$ & $4.48$ $(O)$ & $4.48$\\
12 & 3 & $4.09$ & $4.09$ $(O)$ & $4.09$ & $5.48$ & $5.48$ $(O)$ & $5.48$\\
24 & 2 & $3.00$ & $3.00$ $(O)$ & $3.00$ & $4.50$ & $4.50$ $(O)$ & $4.50$\\
24 & 3 & $4.09$ & $4.09$ $(O)$ & $4.09$ & $6.17$ & - & $6.17$\\
\midrule
& & \multicolumn{3}{c}{$\mathbf{n = 15}$} &
\multicolumn{3}{c}{$\mathbf{n = 20}$} \\
\cmidrule(l{2pt}r{2pt}){3-5}
\cmidrule(l{2pt}r{2pt}){6-8}
$\mu$ & $k$ & \textbf{ENT} & \textbf{Cplex} & \textbf{$H_{\max}$} & \textbf{ENT} & \textbf{Cplex} & \textbf{$H_{\max}$} \\
\midrule
6 & 2 & $5.19$ & $5.19$ $(O)$ & $5.19$ & $5.48$ & $5.48$ $(O)$ & $5.48$ \\
6 & 3 & $5.19$ & $5.19$ $(O)$ & $5.19$ & $5.48$ & $5.48$ $(O)$ & $5.48$ \\
12 & 2 & $5.31$ & $5.31$ $(O)$ & $5.31$ & $5.88$ & $5.88$ $(O)$ & $5.88$ \\
12 & 3 & $5.89$ & $5.89$ $(O)$ & $5.89$ & $6.17$ & $6.17$ $(O)$ & $6.17$ \\
24 & 2 & $5.34$ & $5.33$ $(O)$ & $5.34$ & $5.92$ & $5.91$ $(N)$ & $5.92$ \\
24 & 3 & $6.58$ & - & $6.58$ & $6.87$ & - & $6.87$ \\
\bottomrule

\end{tabular}
\end{footnotesize}
\end{table}
\subsection{Unconstrained Diversity Optimisation} \label{subsec:Uncon_Res}
We first compare classic 2-OPT and biased 2-OPT.  We claimed that biased 2-OPT is more likely to generate offspring contributing to the population's entropy, while classic 2-OPT may perform better in generating tours satisfying the quality constraint. Since no quality constraints are imposed in this subsection, biased 2-OPT is expected to outperform its counterpart. For comparison, we conduct experiments on a complete graph with 100 nodes, and consider $k=2$ and $\mu \in \{25, 125, 250\}$.   

Figure~\ref{fig:Un2opt} compares the convergence pace of classic 2-OPT and biased 2-OPT. The entropy value is shown on the $y$-axis, whereby the $x$-axis represents the number of cost evaluations (iterations). Note that diversity scores shown on the figure are normalised by using Equations~\ref{eq:max} and \ref{eq:min}. Figure \ref{fig:Un2opt} indicates that both operators eventually converge to $H_{\max}$ in most cases. However, biased 2-OPT is faster than the classic 2-OPT.

Figure 3 compares the number of cost evaluations required to converge to $H_{\max}$ in the introduced EA using classic and biased 2-OPT over ten runs. One can observe that the number of required cost evaluations is significantly higher for classic 2-OPT. Biased $2$-OPT, for example, requires around $2,350$ evaluations on average when $\mu = 25$. On the other hand, the figure is around $14\,000$ for classic $2$-OPT. Moreover, none of the operators converges to $H_{\max}$ within the limit of $100\,000$ cost evaluations where $\mu = 250$. In this case, the mean of the entropy value of biased and classic 2-OPT are 9.1993 and 9.1983, respectively, while $H_{\max}$ is equal to 9.1994.  
\begin{figure}
\centering
\begin{tikzpicture}
\begin{axis}[
    width=5cm,
    xlabel={Iteration},
    ylabel={Entropy},
    legend columns=2,
    yticklabel style={rotate=90},
    xticklabel style={font=\scriptsize},
    yticklabel style={font=\scriptsize},
    legend style={draw=none, at={(0.5,1.45)}, anchor=south, /tikz/every even column/.append style={column sep=0.5pt}},
         legend image code/.code={
              \draw[#1] (0cm,-0.1cm) -- (0.3cm,0.1cm);
         },
    title = {$\mu$ = 25}
]
\addplot[color=orange, thick] coordinates {
(100,0.9165)
(200,0.9265)
(300,0.9337)
(400,0.941)
(500,0.9471)
(600,0.9528)
(700,0.9583)
(800,0.9635)
(900,0.9679)
(1000,0.972)
(1100,0.9759)
(1200,0.9796)
(1300,0.9829)
(1400,0.9862)
(1500,0.9891)
(1600,0.992)
(1700,0.9943)
(1800,0.9961)
(1900,0.9974)
(2000,0.9987)
(2100,0.9993)
(2200,0.9997)
(2300,0.9999)
(2400,0.9999)
(2500,1)
(2600,1)
(2700,1)
(2800,1)
(2900,1)
(3000,1)
(3100,1)
(3200,1)
(3300,1)
(3400,1)
(3500,1)
(3600,1)
(3700,1)
(3800,1)
(3900,1)
(4000,1)
(4100,1)
(4200,1)
(4300,1)
(4400,1)
(4500,1)
(4600,1)
(4700,1)
(4800,1)
(4900,1)
(5000,1)
(5100,1)
(5200,1)
(5300,1)
(5400,1)
(5500,1)
(5600,1)
(5700,1)
(5800,1)
(5900,1)
(6000,1)
};
\addplot[color=teal, thick] coordinates {
(100,0.9106)
(200,0.9162)
(300,0.9221)
(400,0.9274)
(500,0.932)
(600,0.9367)
(700,0.9406)
(800,0.944)
(900,0.9473)
(1000,0.9503)
(1100,0.9534)
(1200,0.9557)
(1300,0.9582)
(1400,0.9602)
(1500,0.9621)
(1600,0.9638)
(1700,0.9656)
(1800,0.9673)
(1900,0.9692)
(2000,0.9706)
(2100,0.9722)
(2200,0.9737)
(2300,0.9748)
(2400,0.9762)
(2500,0.9773)
(2600,0.9783)
(2700,0.9793)
(2800,0.9803)
(2900,0.9813)
(3000,0.9822)
(3100,0.983)
(3200,0.984)
(3300,0.9847)
(3400,0.9854)
(3500,0.9861)
(3600,0.9867)
(3700,0.9871)
(3800,0.9876)
(3900,0.9882)
(4000,0.9886)
(4100,0.9891)
(4200,0.9896)
(4300,0.9901)
(4400,0.9905)
(4500,0.991)
(4600,0.9913)
(4700,0.9917)
(4800,0.9921)
(4900,0.9925)
(5000,0.9928)
(5100,0.9931)
(5200,0.9934)
(5300,0.9936)
(5400,0.9939)
(5500,0.9942)
(5600,0.9944)
(5700,0.9946)
(5800,0.9948)
(5900,0.995)
(6000,0.9951)
};

\end{axis}
\end{tikzpicture}
\begin{tikzpicture}
\begin{axis}[
    width=5cm,
    xlabel={Iteration},
    yticklabel style={rotate=90},
    xticklabel style={font=\scriptsize},
    yticklabel style={font=\scriptsize},
    legend columns=2,
    legend style={at={(0.5,1.45)}, anchor=south}, 
    title = {$\mu$ = 125}
]
\addplot[color=orange, thick] coordinates {
(1000,0.96239)
(2000,0.97141)
(3000,0.97826)
(4000,0.983)
(5000,0.98666)
(6000,0.98946)
(7000,0.99176)
(8000,0.99356)
(9000,0.99503)
(10000,0.99619)
(11000,0.99714)
(12000,0.99784)
(13000,0.99835)
(14000,0.99871)
(15000,0.99897)
(16000,0.99915)
(17000,0.99933)
(18000,0.99943)
(19000,0.99954)
(20000,0.99964)
};
\addplot[color=teal, thick] coordinates {
(1000,0.95946)
(2000,0.96718)
(3000,0.97344)
(4000,0.97816)
(5000,0.98184)
(6000,0.98483)
(7000,0.98728)
(8000,0.98921)
(9000,0.99075)
(10000,0.99204)
(11000,0.9931)
(12000,0.994)
(13000,0.99475)
(14000,0.99539)
(15000,0.99593)
(16000,0.99642)
(17000,0.99683)
(18000,0.99719)
(19000,0.9975)
(20000,0.99778)
};
\end{axis}
\end{tikzpicture}
\begin{tikzpicture}
\begin{axis}[
    width=5cm,
    xlabel={Iteration},
    legend columns=2,
    yticklabel style={rotate=90},
    xticklabel style={font=\scriptsize},
    yticklabel style={font=\scriptsize},
    legend style={at={(0.5,1.45)}, anchor=south},
    title = {$\mu$ = 250}
]
\addplot[color=orange, thick] coordinates {
(1000,0.97908)
(2000,0.98283)
(3000,0.9858)
(4000,0.98823)
(5000,0.99021)
(6000,0.99182)
(7000,0.99313)
(8000,0.99423)
(9000,0.99516)
(10000,0.99587)
(11000,0.99641)
(12000,0.99685)
(13000,0.99718)
(14000,0.99746)
(15000,0.99769)
(16000,0.9979)
(17000,0.99808)
(18000,0.99823)
(19000,0.99836)
(20000,0.99846)
(21000,0.99856)
(22000,0.99864)
(23000,0.99874)
(24000,0.9988)
(25000,0.99885)
(26000,0.9989)
(27000,0.99897)
(28000,0.99903)
(29000,0.99905)
(30000,0.9991)
};
\addplot[color=teal, thick] coordinates {
(1000,0.97749)
(2000,0.98044)
(3000,0.98308)
(4000,0.98531)
(5000,0.98716)
(6000,0.98875)
(7000,0.99016)
(8000,0.99131)
(9000,0.99231)
(10000,0.99316)
(11000,0.9939)
(12000,0.99451)
(13000,0.99505)
(14000,0.99549)
(15000,0.99587)
(16000,0.99623)
(17000,0.99651)
(18000,0.99677)
(19000,0.99703)
(20000,0.99723)
(21000,0.99739)
(22000,0.99754)
(23000,0.99769)
(24000,0.99782)
(25000,0.99792)
(26000,0.998)
(27000,0.9981)
(28000,0.99818)
(29000,0.99826)
(30000,0.99831)
};
\end{axis}
\end{tikzpicture}
\caption{Comparison between convergence pace of biased 2-OPT (orange) and classic 2-OPT (green).}
\label{fig:Un2opt}
\end{figure}

\begin{figure}
    \centering
    \includegraphics[width=0.7\columnwidth,scale=1]{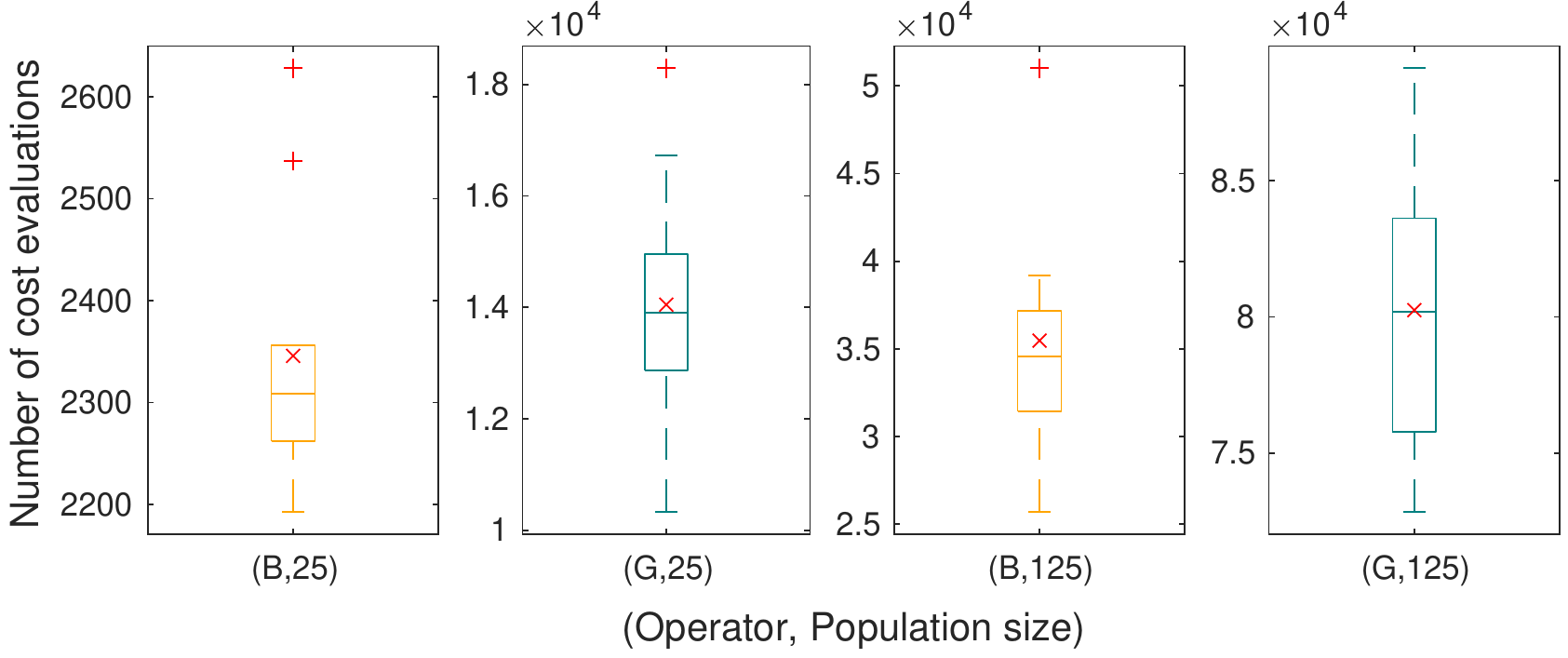}
    \caption{Differences in the required number of cost evaluations to reaching $H_{\max}$ for the biased 2-OPT (orange) and the classic 2-OPT (green).
    }
    \label{fig:operators}
\end{figure}

Next, we provide a comprehensive comparison between the proposed EA and EAs based on ED and PD proposed by~\citet{viet2020evolving}. We conduct experiments on all combinations of $n \in \{50, 100\}$, $\mu \in \{12,20,50,100,500,1\,000\}$ and $k \in \{2,3,4\}$. The termination criteria are reaching either the entropy value of $H_{\max}$ or the limitation of $100\,000$ cost evaluations. Note that the EAs based on ED and PD compare the offspring to the entire population, requiring considerably more diversity evaluations per generated offspring.
Table~\ref{tabel:Uncon} compares the entropy of the final population obtained from the algorithms. Here, we solely use biased 2-OPT due to its efficiency in unconstrained diversity optimisation. The results show that the proposed algorithm outperforms the algorithms based on ED and PD over large populations and long segments, (\ie $\mu \in \{500, 1000\}$ and $k \in \{3,4\}$). In the case $n=50$, $k=3$ and $\mu=1000$, for instance, the introduced EA scores $11.35$ entropy value while the algorithms based on ED and PD achieve $10.73$ and $11.03$, respectively.   
\begin{table*}[htb]
\caption{Comparison between the high-order entropy values of the final populations of the introduced EA and ones based on ED and PD. Stat shows the results of a Kruskal-Wallis test at significance level of $95\%$ with Bonferroni correction. $X^+$ means the median of the measure is better than the one for variant $X$, $X^-$ means it is worse and $X^*$ indicates no significant difference.}
\label{tabel:Uncon}
\centering
\renewcommand{\tabcolsep}{2pt}
\renewcommand{\arraystretch}{1.2}
\begin{footnotesize}
\begin{tabular}{cccccccccccccccccc}
\toprule
& & 
\multicolumn{8}{c}{\textbf{$n=50$}} & 
\multicolumn{8}{c}{\textbf{$n=100$}} \\
\cmidrule(l{2pt}r{2pt}){3-10}
\cmidrule(l{2pt}r{2pt}){11-18}
& & \multicolumn{2}{c}{ENT~(1)} & \multicolumn{2}{c}{ED~(2)} & \multicolumn{2}{c}{PD~(3)} & \multicolumn{2}{c}{Range} &
\multicolumn{2}{c}{ENTB~(1)} & \multicolumn{2}{c}{ED~(2)} & \multicolumn{2}{c}{PD~(3)} & \multicolumn{2}{c}{Range}\\
\cmidrule(l{2pt}r{2pt}){3-4}
\cmidrule(l{2pt}r{2pt}){5-6}
\cmidrule(l{2pt}r{2pt}){7-8}
\cmidrule(l{2pt}r{2pt}){9-10}
\cmidrule(l{2pt}r{2pt}){11-12}
\cmidrule(l{2pt}r{2pt}){13-14}
\cmidrule(l{2pt}r{2pt}){15-16}
\cmidrule(l{2pt}r{2pt}){17-18}
$\mu$ & $k$ & \textbf{mean} & \textbf{stat} & \textbf{mean} & \textbf{stat} & \textbf{mean} & \textbf{stat} & \textbf{$H_{\min}$} & \textbf{$H_{\max}$} & \textbf{mean} & \textbf{stat} & \textbf{mean} & \textbf{stat} & \textbf{mean} & \textbf{stat} & \textbf{$H_{\min}$} & \textbf{$H_{\max}$} \\
\midrule
12 & 2 & 7.09 & $2^{*} 3^{*}$ & 7.09 & $1^{*} 3^{*}$ & 7.09 & $1^{*} 2^{*}$ & 4.6052 & 7.0901 & 7.78 & $2^{*} 3^{*}$ & 7.78 & $1^{*} 3^{*}$ & 7.78 & $1^{*} 2^{*}$ & 5.2983 & 7.7832 \\
12 & 3 & 7.09 & $2^{*} 3^{*}$ & 7.09 & $1^{*} 3^{*}$ & 7.09 & $1^{*} 2^{*}$ & 4.6052 & 7.0901 & 7.78 & $2^{*} 3^{*}$ & 7.78 & $1^{*} 3^{*}$ & 7.78 & $1^{*} 2^{*}$ & 5.2983 & 7.7832 \\
12 & 4 & 7.09 & $2^{*} 3^{*}$ & 7.09 & $1^{*} 3^{*}$ & 7.09 & $1^{*} 2^{*}$ & 4.6052 & 7.0901 & 7.78 & $2^{*} 3^{*}$ & 7.78 & $1^{*} 3^{*}$ & 7.78 & $1^{*} 2^{*}$ & 5.2983 & 7.7832 \\
\cmidrule{1-18}
20 & 2 & 7.60 & $2^{*} 3^{*}$ & 7.60 & $1^{*} 3^{*}$ & 7.60 & $1^{*} 2^{*}$ & 4.6052 & 7.6006 & 8.29 & $2^{*} 3^{*}$ & 8.29 & $1^{*} 3^{*}$ & 8.29 & $1^{*} 2^{*}$ & 5.2983 & 8.2940 \\
20 & 3 & 7.60 & $2^{*} 3^{*}$ & 7.60 & $1^{*} 3^{*}$ & 7.60 & $1^{*} 2^{*}$ & 4.6052 & 7.6006 & 8.29 & $2^{*} 3^{*}$ & 8.29 & $1^{*} 3^{*}$ & 8.29 & $1^{*} 2^{*}$ & 5.2983 & 8.2940 \\
20 & 3 & 7.60 & $2^{*} 3^{*}$ & 7.60 & $1^{*} 3^{*}$ & 7.60 & $1^{*} 2^{*}$ & 4.6052 & 7.6006 & 8.29 & $2^{*} 3^{*}$ & 8.29 & $1^{*} 3^{*}$ & 8.29 & $1^{*} 2^{*}$ & 5.2983 & 8.2940 \\
\cmidrule{1-18}
50 & 2 & 7.80 & $2^{*} 3^{*}$ & 7.80 & $1^{*} 3^{*}$ & 7.80 & $1^{*} 2^{*}$ & 4.6052 & 7.7997 & 9.17 & $2^{+} 3^{+}$ & 9.14 & $1^{-} 3^{*}$ & 9.13 & $1^{-} 3^{*}$ & 5.2983 & 9.1965 \\
50 & 3 & \hl{\textbf{8.52}} & $2^{+} 3^{+}$ & 8.51 & $1^{-} 3^{*}$ & 8.51 & $1^{-} 2^{*}$ & 4.6052 & 8.5172 & 9.21 & $ 2^{*} 3^{*}$ & 9.21 & $1^{*} 3^{*}$ & 9.21 & $1^{*} 2^{*}$ & 5.2983 & 9.2103 \\
50 & 4 & 8.52 & $2^{*} 3^{*}$ & 8.52 & $1^{*} 3^{*}$ & 8.52 & $1^{*} 2^{*}$ & 4.6052 & 8.5172 & 9.21 & $2^{*} 3^{*}$ & 9.21 & $1^{*} 3^{*}$ & 9.21 & $1^{*} 2^{*}$ & 5.2983 & 9.2103 \\
\cmidrule{1-18}
100 & 2 & 7.80 & $2^{*} 3^{+}$ & 7.80 & $1^{*} 3^{+}$ & 7.79 & $1^{-} 2^{-}$ & 4.6052 & 7.8017 & 9.19 & $2^{+} 3^{+}$ & 9.18 & $1^{-} 3^{*}$ & 9.16 & $1^{-} 2^{*}$ & 5.2983 & 9.1982 \\
100 & 3 & \hl{\textbf{9.21}} & $2^{+} 3^{+}$ & 9.18 & $1^{-} 3^{*}$ & 9.19 & $1^{-} 2^{*}$ & 4.6052 & 9.2103 & 9.90 & $2^{+} 3^{+}$ & 9.90 & $1^{-} 3^{*}$ & 9.90 & $1^{-} 2^{*}$ & 5.2983 & 9.9035 \\
100 & 4 & 9.21 & $2^{+} 3^{*}$ & 9.21 & $1^{-} 3^{-}$ & 9.21 & $1^{*} 2^{+}$ & 4.6052 & 9.2103 & 9.90 & $2^{+} 3^{*}$ & 9.90 & $1^{-} 3^{-}$ & 9.90 & $1^{*} 2^{+}$ & 5.2983 & 9.9035 \\
\cmidrule{1-18}
500 & 2 & 7.80 & $2^{*} 3^{+}$ & 7.80 & $1^{*} 3^{+}$ & 7.80 & $1^{-} 2^{-}$ & 4.6052 & 7.8036 & 9.20 & $2^{+} 3^{+}$ & 9.20 & $1^{-} 3^{+}$ & 9.16 & $1^{-} 2^{-}$ & 5.2983 & 9.1999 \\
500 & 3 & \hl{\textbf{10.82}} & $2^{+} 3^{+}$ & 10.45 & $1^{-} 3^{*}$ & 10.60 & $1^{-} 2^{*}$ & 4.6052 & 10.8198 & \hl{\textbf{11.51}} & $2^{+} 3^{+}$ & 11.34 & $1^{-} 3^{*}$ & 11.45 & $1^{-} 2^{*}$ & 5.2983 & 11.5129 \\
500 & 4 & 10.82 & $2^{+} 3^{+}$ & 10.76 & $1^{-} 3^{*}$ & 10.82 & $1^{-} 2^{*}$ & 4.6052 & 10.8198 & 11.51 & $2^{+} 3^{+}$ & 11.47 & $1^{-} 3^{*}$ & 11.51 & $1^{-} 2^{*}$ & 5.2983 & 11.5129 \\
\cmidrule{1-18}
1000 & 2 & 7.80 & $2^{*} 3^{+}$ & 7.80 & $1^{*} 3^{+}$ & 7.79 & $1^{-} 2^{-}$ & 4.6052 & 7.8038 & 9.20 & $3^{*} 4^{+}$ & 9.16 & $1^{*} 3^{+}$ & 9.01 & $1^{-} 2^{-}$ & 5.2983 & 9.2001\\
1000 & 3 & \hl{\textbf{11.35}} & $2^{+} 3^{+}$ & 10.73 & $1^{-} 3^{*}$ & 11.03 & $1^{-} 2^{*}$ & 4.6052 & 11.5129 & \hl{\textbf{12.16}} & $2^{+} 3^{+}$ & 11.33 & $1^{-} 3^{*}$ & 11.89 & $1^{-} 3^{*}$ & 5.2983 & 12.2061 \\
1000 & 4 & \hl{\textbf{11.52}} & $2^{+} 3^{+}$ & 11.30 & $1^{-} 3^{*}$ & 11.50 & $1^{-} 2^{*}$ & 4.6052 & 11.5129 & \hl{\textbf{12.21}} & $3^{+} 4^{+}$ & 10.36 & $1^{-} 3^{-}$ & 11.90 & $1^{-} 2^{+}$ & 5.2983 & 12.2061\\
 
\bottomrule
\end{tabular}
\end{footnotesize}
\end{table*}

\subsection{Constrained Diversity Optimisation} \label{subsec:con_Res}

In constrained diversity optimisation, the performance of classic 2-OPT and biased 2-OPT is strongly correlated to the threshold; the wider the threshold, the better the performance of biased 2-OPT, and vice versa. Therefore, we used both operators in this subsection (see Algorithm \ref{alg:ea}). Here, the experiments are conducted on eil51, eil76, and eil101 from the TSPlib, \cite{Reinelt91tsplib} where a threshold of $\alpha = 5\%$ is considered. Moreover, the limit of cost evaluations increases to $300\,000$ due to the imposition of the quality constraint. 

\begin{table*}[!t]
\caption{Comparison between the high-order entropy values of final populations of the introduced EA and EAs based on ED and PD on TSPlib instances eil51, eil76 and eil101 (threshold is equal to $\alpha=0.05$). Tests and notations are in line with Table \ref{tabel:Uncon}.}
\label{tab:constrained}
\centering
\renewcommand{\tabcolsep}{0.7pt}
\renewcommand{\arraystretch}{1.2}
\begin{footnotesize}
\begin{tabular}{cccccccccccccccccccc}
\toprule
& & \multicolumn{6}{c}{\textbf{eil51~($H_{\min} = 4.6250$)}} & 
\multicolumn{6}{c}{\textbf{eil76~($H_{\min} = 5.0239$)}} & 
\multicolumn{6}{c}{\textbf{eil101~($H_{\min} = 5.3083$)}} \\
\cmidrule(l{2pt}r{2pt}){3-8}
\cmidrule(l{2pt}r{2pt}){9-14}
\cmidrule(l{2pt}r{2pt}){15-20}
& & \multicolumn{2}{c}{ENT~(1)} &
\multicolumn{2}{c}{ED~(2)} & \multicolumn{2}{c}{PD~(3)} &
\multicolumn{2}{c}{ENT~(1)} &
\multicolumn{2}{c}{ED~(2)} & \multicolumn{2}{c}{PD~(3)} & 
\multicolumn{2}{c}{ENT~(1)} &
\multicolumn{2}{c}{ED~(2)} & \multicolumn{2}{c}{PD~(3)} \\
\cmidrule(l{2pt}r{2pt}){3-4}
\cmidrule(l{2pt}r{2pt}){5-6}
\cmidrule(l{2pt}r{2pt}){7-8}
\cmidrule(l{2pt}r{2pt}){9-10}
\cmidrule(l{2pt}r{2pt}){11-12}
\cmidrule(l{2pt}r{2pt}){13-14}
\cmidrule(l{2pt}r{2pt}){15-16}
\cmidrule(l{2pt}r{2pt}){17-18}
\cmidrule(l{2pt}r{2pt}){19-20}
$\mu$ & $k$ & \textbf{mean} & \textbf{stat} & \textbf{mean} & \textbf{stat} & \textbf{mean} & \textbf{stat} & \textbf{mean} & \textbf{stat} & \textbf{mean} & \textbf{stat} & \textbf{mean} & \textbf{stat} & \textbf{mean} & \textbf{stat} & \textbf{mean} & \textbf{stat} & \textbf{mean} & \textbf{stat} \\
\midrule
12 & 2 &  \hl{\textbf{5.1133}} & $2^{+} 3^{+}$ & 5.0586 & $1^{-} 3^{+}$ & 5.0381 & $1^{-} 2^{+}$ & \hl{\textbf{5.4617}} & $2^{+} 3^{+}$ & 5.4047 & $1^{-} 3^{*}$ & 5.3872 & $1^{-} 2^{*}$ & \hl{\textbf{5.8137}} & $2^{+} 3^{+}$ & 5.7674 & $1^{-} 3^{*}$ & 5.7580 & $1^{-} 2^{*}$ \\
12 & 3 & \hl{\textbf{5.5648}} & $2^{*} 3^{+}$ & 5.4964 & $1^{*} 3^{+}$ & 5.4216 & $1^{-} 2^{-}$ & \hl{\textbf{5.8517}} & $2^{*} 3^{+}$ & 5.7699 & $1^{-} 3^{*}$ & 5.6977 & $1^{-} 2^{*}$ & \hl{\textbf{6.2213}} & $2^{*} 3^{+}$ & 6.1784 & $1^{-} 3^{+}$ & 6.1275 & $1^{-} 2^{-}$ \\
12 & 4 & \hl{\textbf{5.7640}} & $2^{+} 3^{+}$ & 5.6764 & $1^{*} 3^{*}$ & 5.6043 & $1^{-} 2^{*}$ & \hl{\textbf{6.0499}} & $2^{+} 3^{+}$ & 5.9346 & $1^{-} 3^{*}$ & 5.8546 & $1^{-} 2^{*}$ & \hl{\textbf{6.4660}} & $2^{+} 3^{+}$ & 6.3742 & $1^{-} 3^{*}$ & 6.3058 & $1^{-} 2^{*}$ \\
\cmidrule{1-20}
20 & 2 & \hl{\textbf{5.1354}} & $2^{+} 3^{+}$ & 5.0543 & $1^{-} 3^{*}$ & 5.0687 & $1^{-} 2^{*}$ & \hl{\textbf{5.4843}} & $2^{+} 3^{+}$ & 5.4205 & $1^{-} 3^{*}$ & 5.4241 & $1^{-} 2^{*}$ & \hl{\textbf{5.8232}} & $2^{+} 3^{+}$ & 5.7961 & $1^{-} 3^{*}$ & 5.7822 & $1^{-} 2^{*}$ \\
20 & 3 & \hl{\textbf{5.6557}} & $2^{+} 3^{+}$ & 5.4943 & $1^{-} 3^{*}$ & 5.5157 & $1^{-} 2^{*}$ & \hl{\textbf{5.9351}} & $2^{+} 3^{+}$ & 5.7911 & $1^{-} 3^{*}$ & 6.7810 & $1^{-} 2^{*}$ & \hl{\textbf{6.3098}} & $2^{+} 3^{+}$ & 6.1778 & $1^{-} 3^{*}$ & 6.1812 & $1^{-} 2^{*}$ \\
20 & 4 & \hl{\textbf{5.9247}} & $2^{+} 3^{+}$ & 5.6846 & $1^{-} 2^{*}$ & 5.7386 & $1^{-} 2^{*}$ & \hl{\textbf{6.1831}} & $2^{+} 3^{+}$ & 5.9656 & $1^{-} 3^{*}$ & 5.9623 & $1^{-} 2^{*}$ & \hl{\textbf{6.5566}} & $2^{+} 2^{+}$ & 6.3810 & $1^{-} 3^{*}$ & 6.3834 & $1^{-} 2^{*}$ \\
\cmidrule{1-20}
50 & 2 & \hl{\textbf{5.1704}} & $2^{+} 3^{+}$ & 5.0618 & $1^{-} 3^{-}$ & 5.1017 & $1^{-} 2^{+}$ & \hl{\textbf{5.5015}} & $2^{+} 3^{+}$ & 5.4194 & $1^{-} 3^{-}$ & 4.4454 & $1^{-} 2^{+}$ & \hl{\textbf{5.8262}} & $2^{+} 3^{+}$ & 5.7607 & $1^{-} 3^{-}$ & 5.7938 & $1^{-} 2^{+}$ \\
50 & 3 & \hl{\textbf{5.7371}} & $2^{+} 3^{+}$ & 5.5087 & $1^{-} 3^{-}$ & 5.6150 & $1^{-} 2^{+}$ & \hl{\textbf{5.9961}} & $2^{+} 3^{+}$ & 5.7861 & $1^{-} 3^{-}$ & 5.8497 & $1^{-} 2^{+}$ & \hl{\textbf{6.3594}} & $2^{+} 3^{+}$ & 6.1816 & $1^{-} 3^{-}$ & 6.2370 & $1^{-} 2^{+}$ \\
50 & 4 & \hl{\textbf{6.0927}} & $2^{+} 3^{+}$ & 5.7123 & $1^{-} 3^{-}$ & 5.8982 & $1^{-} 2^{+}$ & \hl{\textbf{6.2776}} & $2^{+} 3^{+}$ & 5.9674 & $1^{-} 3^{*}$ & 6.0776 & $1^{-} 2^{*}$ & \hl{\textbf{6.6490}} & $2^{+} 3^{+}$ & 6.3997 & $1^{-} 3^{-}$ & 6.4858 & $1^{-} 2^{+}$ \\
\cmidrule{1-20}
100 & 2 &\hl{\textbf{ 5.1683}} & $2^{+} 3^{+}$ & 5.0623 & $1^{-} 3^{-}$ & 5.1033 & $1^{-} 2^{+}$ & \hl{\textbf{5.4911}} & $2^{+} 3^{+}$ & 5.4227 & $1^{-} 3^{-}$ & 5.4464 & $1^{-} 2^{+}$ & \hl{\textbf{5.7980}} & $2^{+} 3^{+}$ & 5.7569 & $1^{-} 3^{-}$ & 5.7804 & $1^{-} 2^{+}$ \\
100 & 3 & \hl{\textbf{5.7503}} & $2^{+} 3^{+}$ & 5.5175 & $1^{-} 2^{-}$ & 5.6452 & $1^{-} 2^{+}$ & \hl{\textbf{5.9870}} & $2^{+} 3^{+}$ & 5.8120 & $1^{-} 3^{-}$ & 5.8658 & $1^{-} 2^{+}$ & \hl{\textbf{6.2890}} & $2^{+} 3^{+}$ & 6.1838 & $1^{-} 3^{-}$ & 6.2291 & $1^{-} 2^{+}$ \\
100 & 4 & \hl{\textbf{6.1436}} & $2^{+} 3^{+}$ & 5.7319 & $1^{-} 3^{-}$ & 5.9646 & $1^{-} 2^{+}$ & \hl{\textbf{6.3027}} & $2^{+} 3^{+}$ & 6.0127 & $1^{-} 3^{-}$ & 6.1098 & $1^{-} 2^{+}$ & \hl{\textbf{6.6246}} & $2^{+} 3^{+}$ & 6.4137 & $1^{-} 3^{-}$ & 6.4938 & $1^{-} 2^{+}$ \\
\cmidrule{1-20}
500 & 2 & \hl{\textbf{5.1203}} & $2^{+} 3^{+}$ & 5.0396 & $1^{-} 3^{-}$ & 5.0815 & $1^{-} 2^{+}$ & \hl{\textbf{5.4320}} & $2^{+} 3^{*}$ & 5.4013 & $1^{-} 3^{-}$ & 5.4244 & $1^{*} 2^{+}$ & 5.7070 & $2^{*} 3^{-}$ & 5.7111 & $1^{*} 3^{-}$ & \hl{\textbf{5.7377}} & $1^{+} 2^{+}$ \\
500 & 3 & \hl{\textbf{5.6794}} & $2^{+} 3^{+}$ & 5.5131 & $1^{-} 3^{-}$ & 5.6359 & $1^{-} 2^{+}$ & \hl{\textbf{5.8876}} & $2^{+} 3^{+}$ & 5.8077 & $1^{-} 3^{-}$ & 5.8653 & $1^{-} 2^{+}$ & 6.1379 & $2^{*} 3^{-}$ & 6.1180 & $1^{*} 3^{-}$ & \hl{\textbf{6.1808}} & $1^{+} 2^{+}$ \\
500 & 4 & \hl{\textbf{6.0864}} & $2^{+} 3^{+}$ & 5.7660 & $1^{-} 3^{-}$ & 5.9991 & $1^{-} 2^{+}$ & \hl{\textbf{6.2218}} & $2^{+} 3^{+}$ & 6.0399 & $1^{-} 3^{-}$ & 6.1469 & $1^{-} 2^{+}$ & \hl{\textbf{6.5770}} & $2^{+} 3^{*}$ & 6.3648 & $1^{-} 3^{-}$ & 6.4616 & $1^{*} 2^{+}$ \\
\cmidrule{1-20}
1000 & 2 & \hl{\textbf{5.0909}} & $2^{+} 3^{+}$ & 5.0187 & $1^{-} 3^{-}$ & 5.0585 & $1^{-} 2^{+}$ & 5.4074 & $2^{+} 3^{*} $ & 5.3811 & $1^{-} 3^{-}$ & \hl{\textbf{5.6926}} & $1^{*} 2^{+}$ & 5.7194 & $2^{*} 3^{-}$ & 5.6933 & $1^{*} 3^{*}$ & 5.7194 & $1^{+} 2^{*}$ \\
1000 & 3 & \hl{\textbf{5.6291}} & $2^{+} 3^{+}$ & 5.4760 & $1^{-} 3^{-}$ & 5.5943 & $1^{*} 2^{+}$ & \hl{\textbf{5.8442}} & $2^{+} 3^{*}$ & 5.7712 & $1^{-} 3^{-}$ & 5.8333 & $1^{*} 2^{+}$ & 6.0987 & $2^{*} 3^{-}$ & 6.0891 & $1^{*} 3^{-}$ & \hl{\textbf{6.1495}} & $1^{*} 2^{+}$ \\
1000 & 4 & \hl{\textbf{6.0238}} & $2^{+} 3^{+}$ & 5.7357 & $1^{-} 3^{-}$ & 5.9498 & $1^{*} 2^{+}$ & \hl{\textbf{6.1771}} & $2^{+} 3^{+}$ & 6.0039 & $1^{-} 3^{-}$ & 6.1116 & $1^{-} 2^{+}$ & \hl{\textbf{6.4372}} & $2^{+} 3^{*}$ & 6.3372 & $1^{-} 3^{-}$ & 6.4348 & $1^{*} 2^{+}$ \\
\bottomrule
\end{tabular}
\end{footnotesize}
\end{table*}

In line with the unconstrained diversity optimisation, Table~\ref{tab:constrained} compares the entropy value of the final population obtained from the introduced algorithm and the algorithms based on ED and PD in~\cite{viet2020evolving}. Table~\ref{tab:constrained} indicates that the introduced EA outperforms the ones based on ED and PD in most instances. The algorithm based on PD has achieved a better entropy value over only four cases. Given that all these four cases are among the largest ones, a possible reason could be differences in the algorithms' survival selection resulting in slower convergence of the introduced EA than the others in terms of cost evaluations. However, the smaller instances show that if the number of cost evaluations is sufficient, the introduced EA is likely to outperform the others. We conduct another experiment summarised in Figure~\ref{fig:itera} to more elaborate on this matter.  
\begin{figure*}
\centering
\input{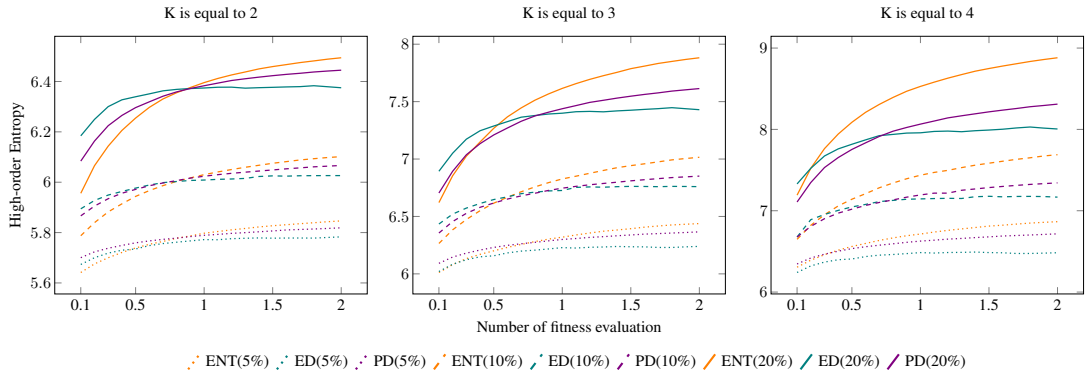}

\caption{Impact of the number of fitness evaluations on the algorithms on eil101. The percentage values show the allowed threshold.}
\label{fig:itera}
\end{figure*}
In Figure~\ref{fig:itera}, the number of cost evaluations is shown on the $x$-axis, while the $y$-axis presents the final population's entropy. Figure \ref{fig:itera} shows the results on eil101, $\mu=500$, $k \in \{2,3,4\}$ and $\alpha \in \{0.05,0.1,0.2\}$. Since we observed the same pattern for the other cases, the figure is contented for the sake of brevity. Figure~\ref{fig:itera} indicates that if the number of cost evaluations is deficient, the introduced EA results in a lower entropy value than the other algorithms. Nevertheless, it always converges to a higher entropy value. This pattern can be observed for all nine combinations. As the value of $k$ rises, the introduced EA surpasses the other two in less number of cost evaluations. In comparing ED and PD, the algorithm using ED converges faster but at a lower entropy.

Figure \ref{fig:edge_overlays} shows the edges used in the sets of $125$ tours obtained from the introduced EA in constrained ($\alpha \in \{0, 0.05, 0.5\}$) and unconstrained diversity optimisation on eil101. The figure clearly highlights the proportional relationship between $\alpha$ and the diversity of the population. Figure \ref{fig:edge_overlays} also depicts the differences in a population with the entropy value of $H_{\min}$ (first plot on the top) with a population with $H_{\max}$ entropy (the second plot on the bottom). As the population's entropy increases, the number of incorporated edges (segments) rises while the frequency of edges inclines. 

 \begin{figure}[ht]
  \centering
  \includegraphics[width=.23\columnwidth]{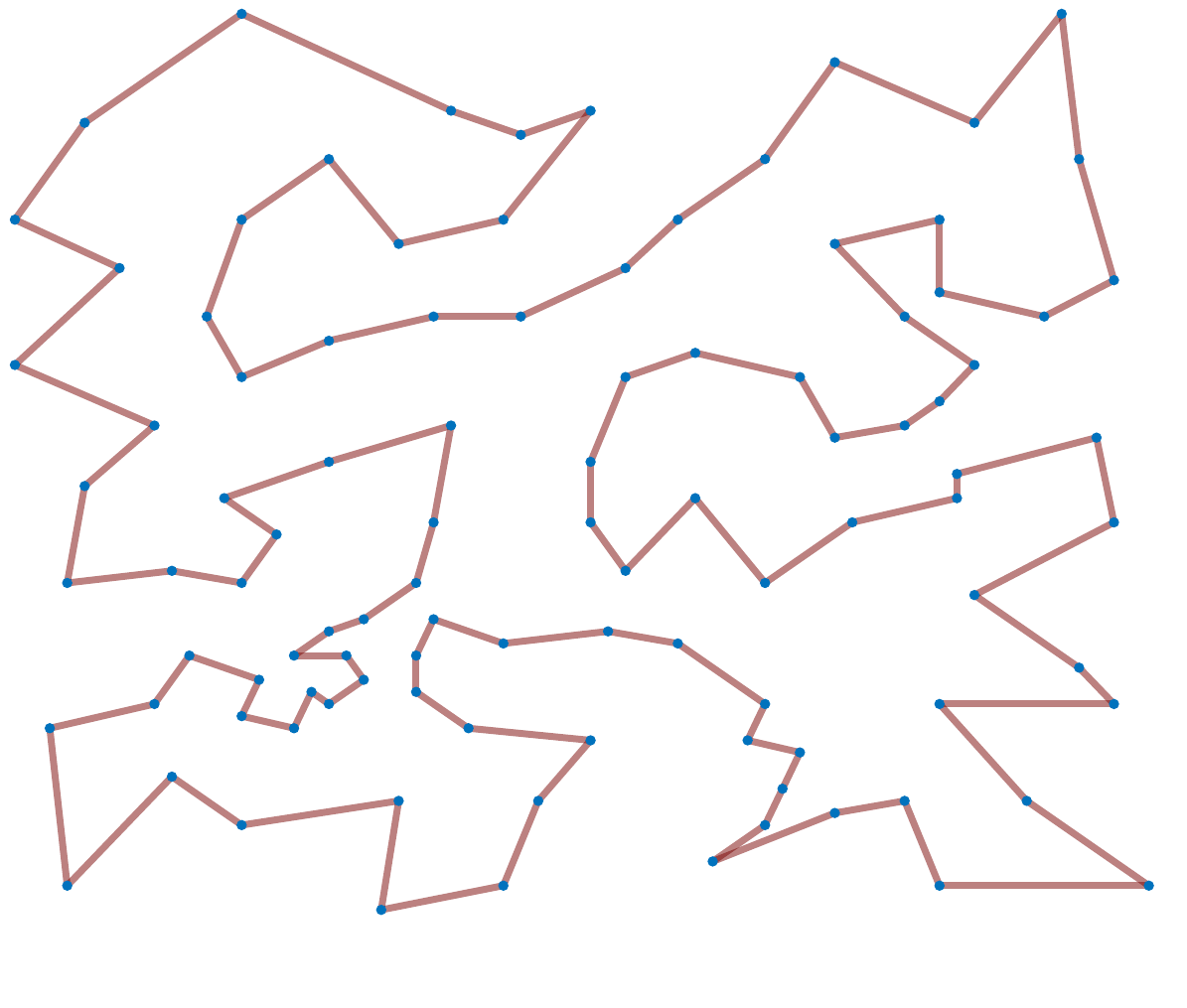}
  \hskip5pt
   \includegraphics[width=.23\columnwidth]{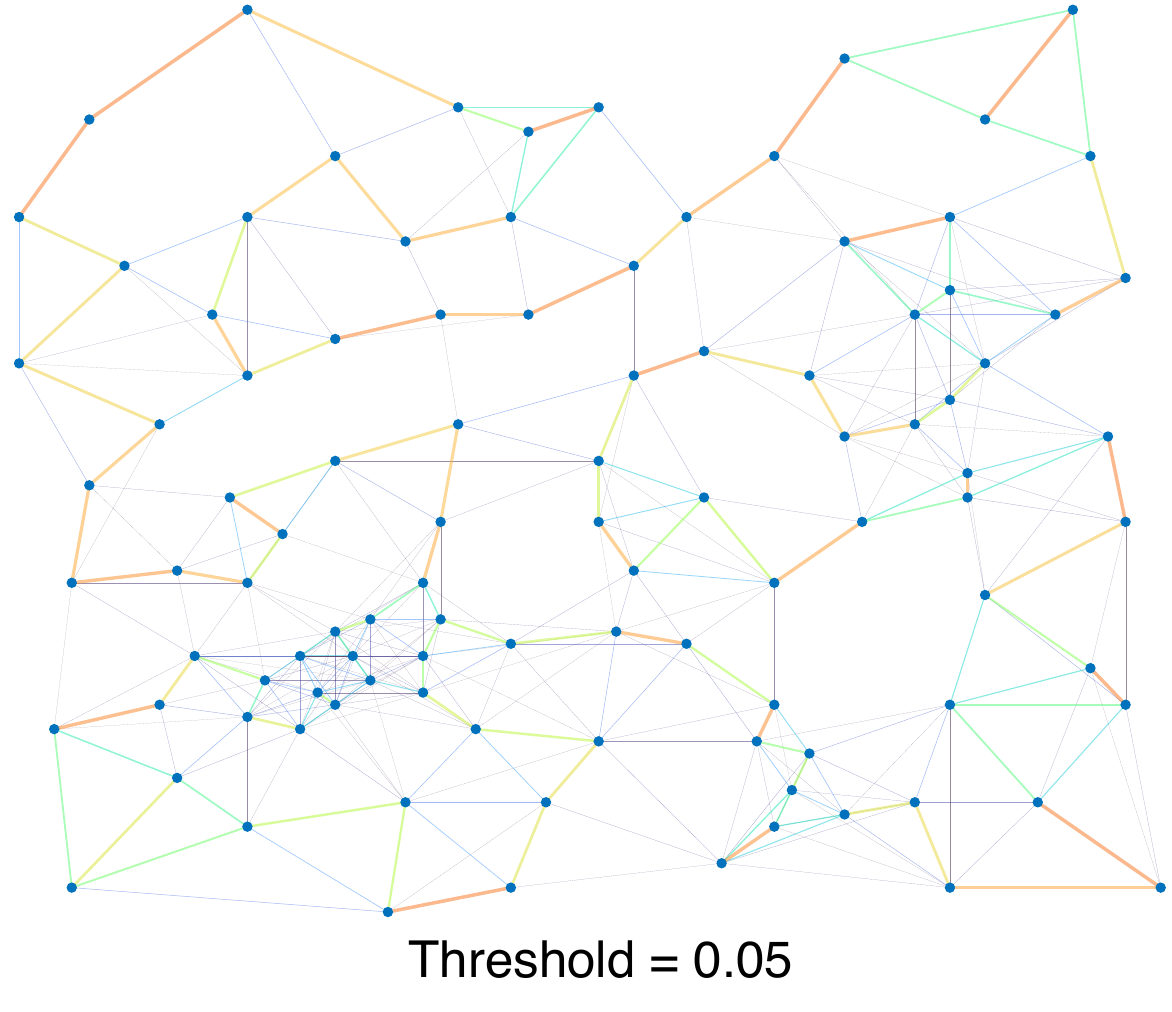}
   \hskip10pt
   \includegraphics[width=.23\columnwidth]{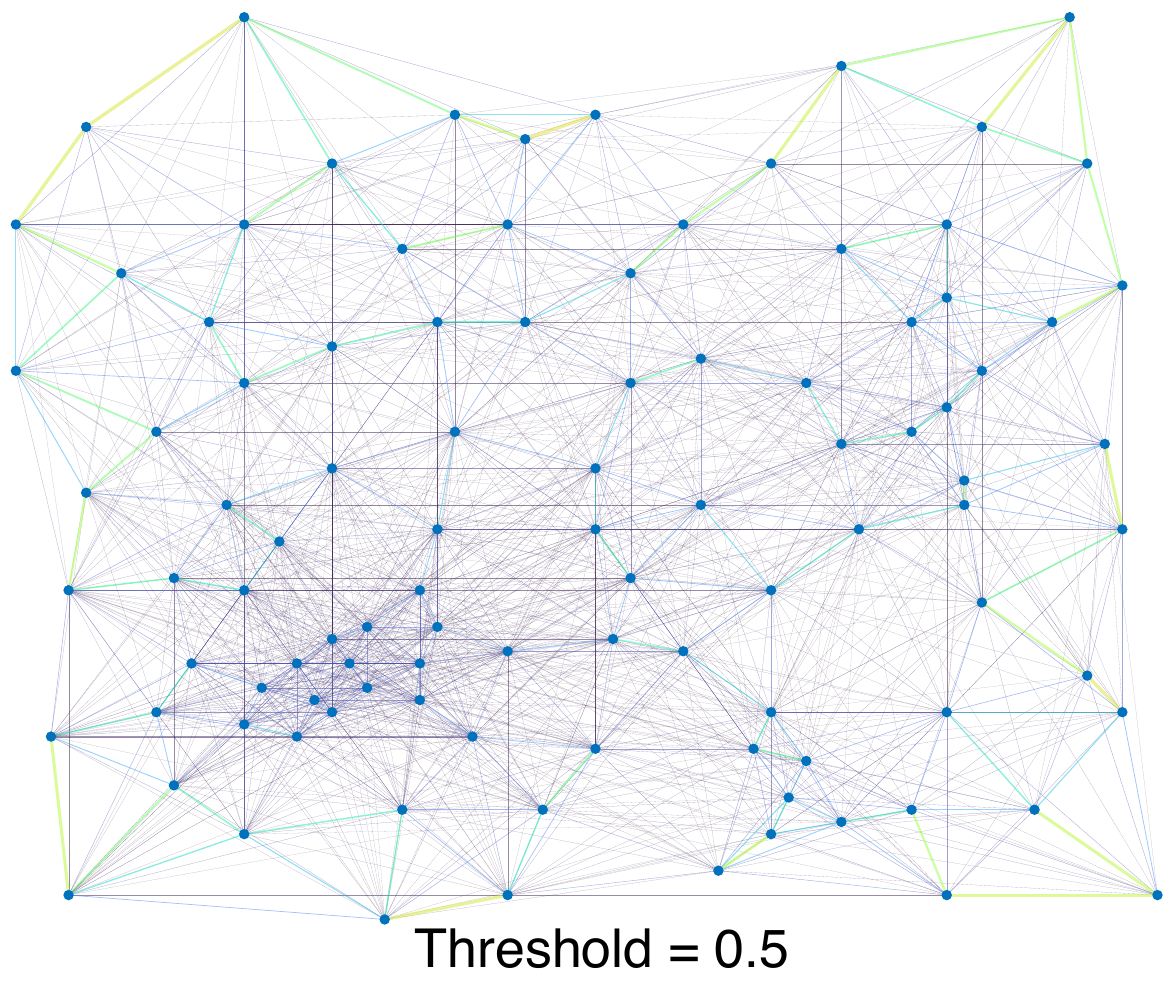}
   \hskip5pt
  \includegraphics[width=.23\columnwidth]{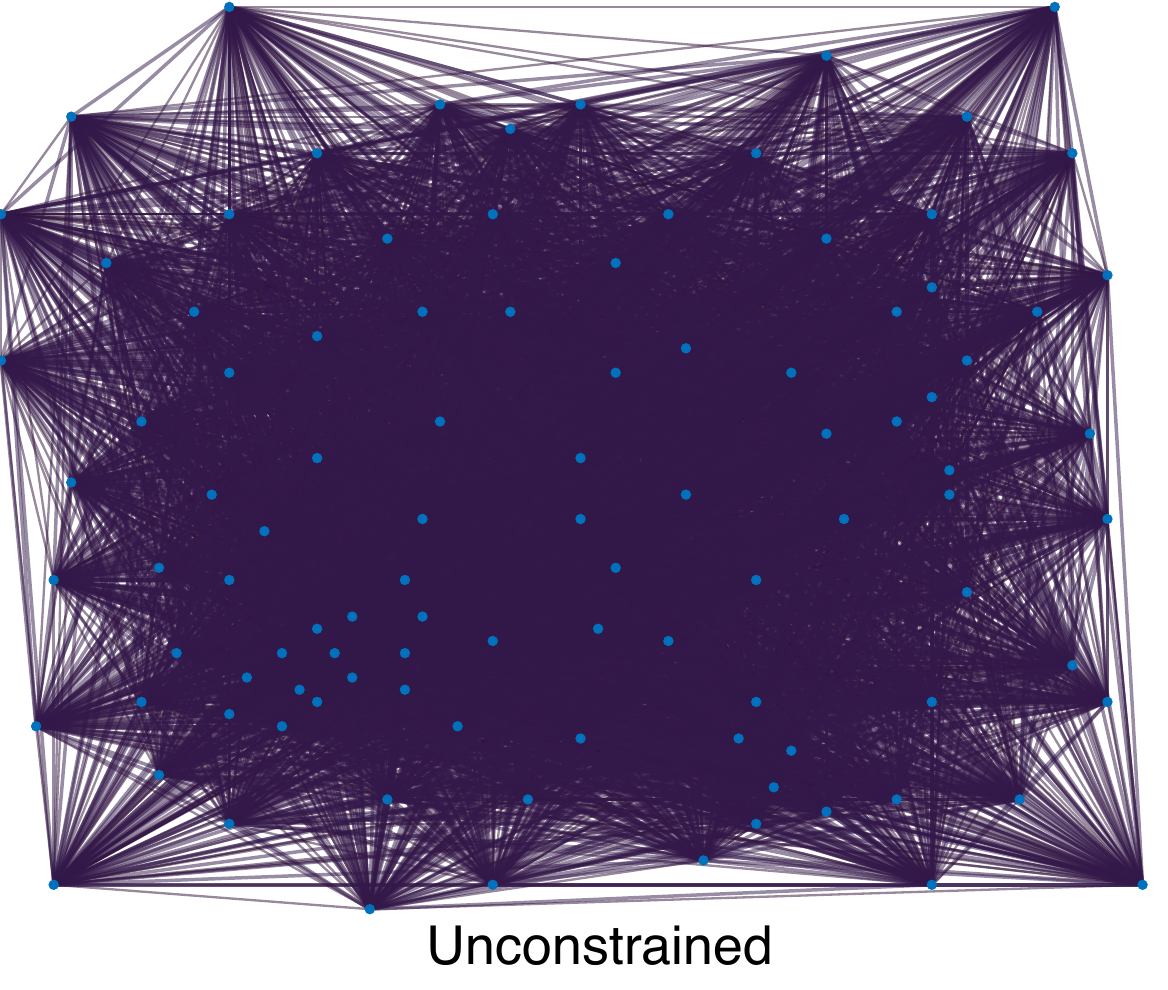}
 \begin{tikzpicture}
 \node[opacity=.55] (legend) at (0,0) {\includegraphics[width=0.95\columnwidth,trim=20pt 40pt 10pt 0,clip]{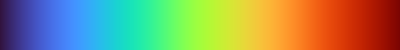}};
  \node (low) at (-3.6,-0.3) {\scriptsize{\textcolor{gray!90}{low}}};
  \node (medium) at (0,-0.3) {\scriptsize{\textcolor{gray!90}{medium}}};
  \node (high) at (3.6,-0.3) {\scriptsize{\textcolor{gray!90}{high}}};
  \end{tikzpicture}
 \caption{Overlay of the edges (coloured based on their frequency) incorporated into the population of the introduced EA on eil101 where $\alpha$ increases from $0$ to $+\infty$.}
 \label{fig:edge_overlays}
 \end{figure}

\section{Conclusion} \label{sec:con}

In the context of EDO, we aim to evolve diverse sets of solutions meeting minimal quality criteria. EDO has been rarely considered for classical combinatorial optimisation problems so far. We adopted a new diversity measure based on high-order entropy to maximise the diversity of a population of TSP solutions. The diversity measure allows equalising the share of segments of multiple nodes, whereas previously proposed diversity measures by~\cite{viet2020evolving} in the TSP context focus on the frequency of single edges in the population. 
We show theoretical properties that a maximally/minimally diverse set of solutions has to fulfill. Furthermore, we study the effects of the high-order entropy measure embedded into a simple population-based evolutionary algorithm experimentally. This algorithm uses different versions of 2-OPT mutations partially biased towards favouring high-frequency segments in TSP tours.
Our results in the unconstrained setting without quality restriction and the constrained setting on TSPlib instances show the superiority of the proposed approach if the number of cost evaluations is high.

Future studies seem intriguing to enhance the state-of-the-art evolutionary algorithm EAX for the TSP in terms of EDO. 
Besides, the application of high-order entropy diversity optimisation into EAs for other combinatorial optimisation problems seems to be an interesting step.
\section*{acknowledgements}
This work was supported by the Australian Research Council through grant DP190103894 and by the South Australian Government through the Research Consortium "Unlocking Complex Resources through Lean Processing".

\bibliographystyle{abbrvnat}
\bibliography{references}

\end{document}